\theoremstyle{plain}
\newtheorem{theorem}{Theorem}[section]
\newtheorem{proposition}[theorem]{Proposition}
\theoremstyle{definition}
\newtheorem{definition}[theorem]{Definition}
\theoremstyle{remark}
\icmltitlerunning{Sample-specific Masks for Visual Reprogramming-based Prompting}
\begin{document}

\twocolumn[
\icmltitle{Sample-specific Masks for Visual Reprogramming-based Prompting}




\begin{icmlauthorlist}
\icmlauthor{Chengyi Cai}{Unimelb}
\icmlauthor{Zesheng Ye}{Unimelb}
\icmlauthor{Lei Feng}{NTU}
\icmlauthor{Jianzhong Qi}{Unimelb}
\icmlauthor{Feng Liu}{Unimelb}
\end{icmlauthorlist}

\icmlaffiliation{Unimelb}{School of Computing and Information Systems, The University of Melbourne}
\icmlaffiliation{NTU}{Information Systems Technology and Design Pillar, Singapore University of Technology and Design}

\icmlcorrespondingauthor{Feng Liu}{fengliu.ml@gmail.com}

\icmlkeywords{Machine Learning, ICML}

\vskip 0.3in
]



\printAffiliationsAndNotice{}  

\begin{abstract}
\emph{Visual reprogramming} (VR) is a prompting technique that aims to re-purpose a pre-trained model (e.g., a classifier on ImageNet) to target tasks (e.g., medical data prediction) by learning a \emph{small-scale pattern} added into input images instead of tuning considerable parameters within the model. 
The location of the pattern within input samples is usually determined by a pre-defined mask \emph{shared across all samples}. 
In this paper, we show that the shared mask potentially limits VR's generalization and increases its approximation error due to the lack of sample-level adaptation.
Motivated by this finding, we design a new framework for VR called \emph{sample-specific multi-channel masks} (SMM). 
Specifically, SMM employs a lightweight ConvNet and patch-wise interpolation to generate sample-specific three-channel masks instead of a shared and pre-defined mask.
Since we generate different masks for individual samples, SMM is theoretically shown to reduce approximation error for the target tasks compared with existing state-of-the-art VR methods. We also empirically demonstrate its performance gain on both ResNet and ViT.
The success of SMM further highlights the broader applicability of VR in leveraging the latent knowledge of pre-trained models for various target tasks. Our code is available at \url{https://github.com/tmlr-group/SMM}.
\end{abstract}

\section{Introduction}
Recent studies have shown that, by taking advantage of and re-purposing well-trained/pre-trained models, one can address new tasks (i.e., \emph{target tasks}) without training a task-specific model from scratch  \cite{basu2023efficient,kossen2023three,mondal2023equivariant}. 
In visual tasks, due to the expensive training costs even just to finetune pre-trained models, \emph{visual reprogramming} (VR) \cite{neekhara2022cross,wang2022watermarking,chen2023understanding,tsao2024autovp}, or adversarial reprogramming \cite{elsayed2018adversarial,tsai2020transfer}, has been proposed to reuse pre-trained models on target tasks. 
Concretely, VR is a prompting method that fixes a pre-trained model and only alters the input space by adding some learnable patterns (usually some noise) to target images. 
The location of the patterns to be learned is usually determined by a pre-defined binary mask that is \emph{shared across all samples} \cite{elsayed2018adversarial,yang2021voice2series, tsai2020transfer,bahng2022exploring}.
The key benefit of VR methods is that learning the pattern whose size is around the image size requires much less computing resource than finetuning considerable parameters within the model, posing VR as a promising research area in using pre-trained models \cite{chen2023understanding,tsao2024autovp}.

\begin{figure}
    \centering
    \includegraphics[width=\linewidth]{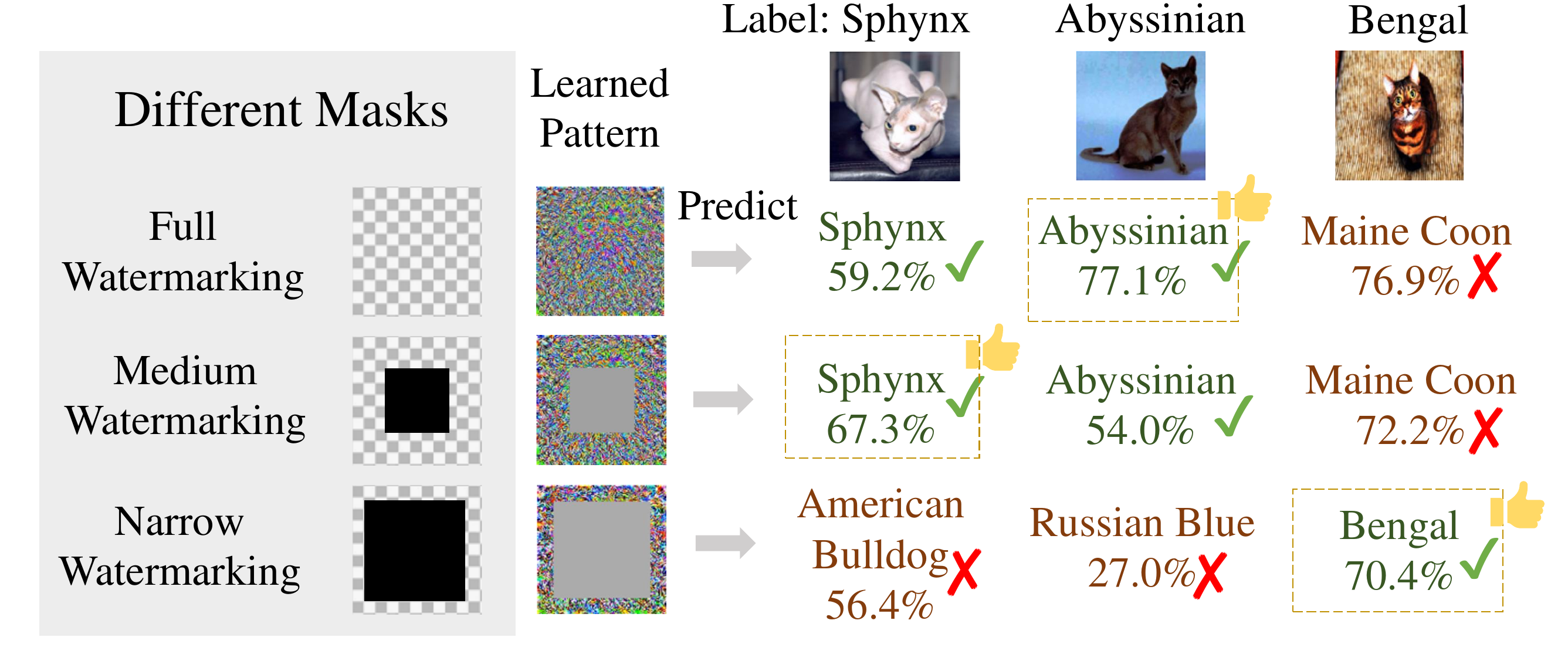}
    \caption{Drawback of shared masks over individual images. 
    We demonstrate the use of watermarking~\cite{wang2022watermarking}, a representative VR method, to re-purpose an ImageNet-pretrained classifier for the OxfordPets dataset, with different shared masks (full, medium, and narrow) in VR. An evaluation of classification confidence across three cat images — Sphynx, Abyssinian, and Bengal — indicates a sample-specific mask preference: Sphynx with medium, Abyssinian with full, and Bengal with narrow. It shows that different masks are needed for individual images.
    }
    \label{fig:motivaton2}
\end{figure}

\begin{figure*}
    \centering
    \includegraphics[width=\linewidth]{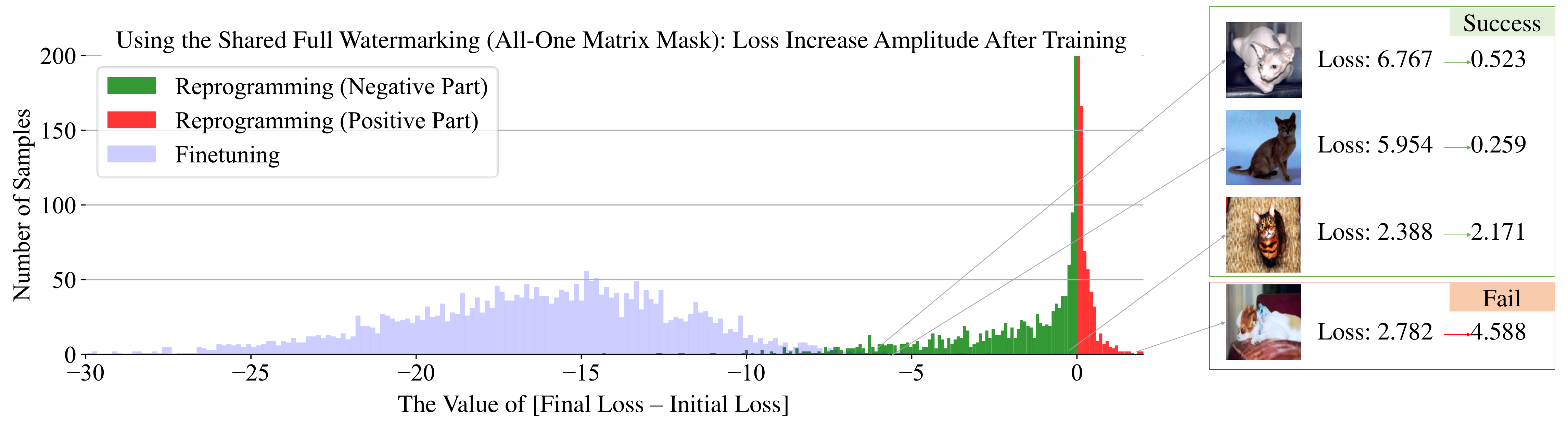}
    \caption{Drawback of shared masks in the statistical view. Optimal learning methods like finetuning usually result in loss decreases for all samples (see the blue part). But when applying the same mask in reprogramming, part of the loss changes are observed to be positive (see the red part) according to the distribution of [final loss - initial loss], which means the training loss for some samples even rises.}
    \label{fig:motivation1}
\end{figure*}

In this paper, we show that the shared mask often leads to poor generalization capability of VR, as demonstrated in Figures~\ref{fig:motivaton2} and \ref{fig:motivation1}. In both figures, we use a representative VR method, watermarking \cite{bahng2022exploring}, to re-purpose an ImageNet-pretrained classifier to classify images in the OxfordPets datasets \cite{parkhi2012cats}.
In Figure \ref{fig:motivaton2}, we first find that the optimal masks vary among individual images. We apply three kinds of masks (full, medium, and narrow) in watermarking. By observing the classification confidence on three cat images: Sphynx, Abyssinian, and Bengal, we see that the medium mask is the best for Sphynx, the full mask for Abyssinian, and the narrow mask for Bengal. This suggests that different masks are needed for individual images.
In Figure~\ref{fig:motivation1}, we then find that watermarking with a single shared mask may cause the training loss of many individual samples to rise (see the red part in Figure~\ref{fig:motivation1}). This phenomenon reveals that VR methods' learning capacity is much less than finetuning all parameters of the pre-trained model (see the blue part in Figure~\ref{fig:motivation1}). 

The examples above show a significant disadvantage of using a single shared mask for VR. This motivates our new VR framework called \emph{sample-specific multi-channel masks} (SMM). SMM replaces the \emph{fixed} binary mask applied in existing works with \emph{generative} three-channel masks that can vary across different samples (shown in Figure~\ref{fig:framework}). 

SMM has two modules: a mask generator module and a patch-wise interpolation module. The mask generator is a lightweight \emph{convolutional neural network} (CNN) that takes resized individual target-domain images (i.e., samples) as the input and outputs different masks for each sample. The last layer of the generator is designed to generate a three-channel mask, which allows better performance for both rich-color images (i.e., CIFAR10/100 \cite{krizhevsky2009learning}) and monotonous-color images (i.e., SVHN \cite{yuval2011reading}). Since the generated masks should be the same size as the pattern to be learned, when the size of masks is inconsistent with that of the pattern, the patch-wise interpolation module will be utilized to re-scale the generated masks to fit the pattern, facilitating the training process of the mask generator (detailed in Section~\ref{method}).

To understand why SMM is effective, we theoretically analyze the approximation error of different hypothesis sets for VR. Three hypothesis sets are considered: shared pattern with a pre-defined binary mask, sample-specific patterns without masks, and our SMM. We show that SMM has a smaller approximation error (Proposition~\ref{proposition1}), which confirms the effectiveness of SMM. 

To further substantiate the efficacy of SMM, we conduct empirical evaluations spanning 11 widely used datasets, incorporating ablation studies that discern the impact of individual SMM components.
This is complemented by analysis and interpretations of the generated masks, alongside a comparative visualization of feature spaces.
Notably, we demonstrate the effectiveness of SMM with both pre-trained ResNet~\cite{he2016deep} and ViT~\cite{dosovitskiy2020image} (Table~\ref{resnetresults} and \ref{ViTresults}), validating that SMM is compatible with commonly used classifier architectures.

Both the theoretical analysis and promising experimental results provide solid evidence that, when powered by SMM, VR can efficiently leverage knowledge within a well-trained/pre-trained model for various target tasks, shedding new light on the explanatory analysis of VR and opening avenues for future research.

\section{Preliminaries and Related Works}
\subsection{Problem Setting of Model Reprogramming}
Model reprogramming \citep{chen2022survey} offers an efficient transfer learning paradigm for adapting pre-trained models to resource-constrained target tasks.
This paradigm re-purposes existing knowledge by strategically transforming inputs and outputs, bypassing extensive model parameter finetuning.
In what follows, we will present a formal problem setting for model reprogramming.

Let $\mathcal{D}_{\rm T}$ represent the data distribution of a target task defined over $\mathcal{X}^{\rm T}\times \mathcal{Y}^{\rm T}$, where $\mathcal{X}^{\rm T}\subseteq  \mathbb{R}^{d_{\rm T}}$ is the data space and $\mathcal{Y}^{\rm T}=\{1,\dots,k_{\rm T}\}$ is the label space, and let $\{(x^{\rm T}_{i},y^{\rm T}_i)\}_{i=1}^n$ be the observations of $\mathcal{D}_{\rm T}$ (i.e., the training set in the target task). Meanwhile, we have a pre-trained model $f_{\rm P}:\mathcal{X}^{\rm P}\rightarrow \mathcal{Y}^{\rm P}$, where $\mathcal{X}^{\rm P} \subseteq \mathbb{R}^{d_{\rm P}}$ and $\mathcal{Y}^{\rm P}$ (s.t. $|\mathcal{Y}^{\rm T}| \leq |\mathcal{Y}^{\rm P}|$, with the label space of the pre-trained task being larger than that of the target task) represent the data and label spaces used for training $f_{\rm P}$. 
Then, in model reprogramming, the training objective can be formulated as
\begin{align}\label{eq:vp_obj}
    \mathop{\min}\limits_{\theta\in\Theta, \omega\in\Omega}\frac{1}{n}\sum_{i=1}^n\ell(f_{\rm out}(f_{\rm P}(f_{\rm in}(x^{\rm T}_i|\theta))|\mathcal{Y}^{\rm P}_{\rm sub},\omega), y^{\rm T}_i),
\end{align}
where $f_{\rm in}(.|\theta): \mathcal{X^{\rm T}} \mapsto  {\mathcal{X}}^{\rm P}, f_{\rm out}(.|\mathcal{Y}^{\rm P}_{\rm sub},\omega): \mathcal{Y}^{\rm P}_{\rm sub} \mapsto \mathcal{Y^{\rm T}}$ are the input transformation and output label mapping function with parameters $\theta\in\Theta$ and $\omega\in\Omega$, $\mathcal{Y}^{\rm P}_{\rm sub}\subseteq\mathcal{Y}^{\rm P}$ can be determined by different methods \cite{elsayed2018adversarial,tsai2020transfer,chen2023understanding}, and $\ell:\mathcal{Y}^{\rm T}\times\mathcal{Y}^{\rm T}\mapsto \mathbb{R}^+\cup\{0\}$ is a loss function.
Reprogramming techniques have been widely applied in visual \cite{elsayed2018adversarial,tsai2020transfer}, text \cite{neekhara2018adversarial,hambardzumyan2021warp}, speech \cite{yang2021voice2series,yang2023english,yen2023neural}, music \cite{hung2023low}, and cross-modal tasks \cite{neekhara2022cross} in the past few years. 

In the context of visual tasks, reprogramming has demonstrated potential in bio-medical measurement \citep{tsai2020transfer}, machine learning fairness \citep{zhang2022fairness},
as well as out-of-distribution detection through 
watermarking \citep{wang2022watermarking}.
Moving beyond application prospects, we next discuss the technical details of the specific input and output mapping functions ($f_{\rm in}$ and $f_{\rm out}$).

\subsection{Prompting and Input Visual Reprogramming}
General prompting methods in visual tasks, predominantly applied to the ViT architecture \cite{dosovitskiy2020image}, introduce extra parameters to a pre-trained model for enhanced training efficiency.
Prompts are flexible in their placement. 
For example, \emph{visual prompt tuning} \cite{jia2022visual} positions prompts alongside image embedding before the encoder layers, while \emph{effective and efficient visual prompt tuning} \cite{han20232vpt} extends this by incorporating parameters within self-attention layers as well. \emph{Transformer with hierarchical prompting} \cite{wang2023transhp} also learns prompt tokens to represent the coarse image classes.

Meanwhile, prompting goes beyond vision foundation models to vision-language frameworks such as CLIP \cite{radford2021learning}. 
Methods like CoOP \cite{zhou2022learning} and CoCoOP \cite{zhou2022conditional} replace textual prompts with learnable vectors for enhanced adaptability to the target task, conditioned on input images.
MaPLe \cite{khattak2023maple} further bridges vision and language by learning layer-specific mapping functions. 
These methods vary from each other in terms of both prompt placements and functions.

In contrast, VR provides a model-agnostic prompting technique, by adding trainable noise to the input image patterns before the forward propagation, without altering their visual essence. 
Originally proposed by~\citet{elsayed2018adversarial}, VR has been evolving to include padding-based methods \cite{tsai2020transfer,chen2023understanding} and watermarking that facilitate downstream target tasks \cite{bahng2022exploring}.
AutoVP \cite{tsao2024autovp} stands out with its scalable pre-padding images. 
A critical limitation in existing VR research is the use of \emph{shared} noise patterns across \emph{all target samples}, neglecting sample-level characteristics and compromising generalization.
We propose SMM to manage this gap.

\subsection{Output Mapping of Reprogramming}
Learning-based output mapping, i.e., model $f_{\rm out}$, as proposed by \citet{chen2023understanding}, can be simplified as a one-to-one mapping from a subset of $\mathcal{Y^{\rm P}}$ to $\mathcal{Y^{\rm T}}$. Therefore, no additional parameters are required. 
One implementation of this mapping is \emph{random label mapping} (Rlm), where $f_{\rm out}$ is a randomly assigned injective function \cite{elsayed2018adversarial,chen2023understanding}, formulated as
\begin{align}\label{eq:rlm}
    f_{\rm out}^{\rm Rlm}(y|\mathcal{Y}^{\rm P}_{\rm sub})=\texttt{rand}(\{0, 1, ...,k^{\rm T}\}),
\end{align}
where $\texttt{rand}(\{0, 1, ...,k^{\rm T}\})$ means randomly selecting one element from the set $\{0, 1, ...,k^{\rm T}\}$, and $\mathcal{Y}^{\rm P}_{\rm sub}$ is of the same size with $\mathcal{Y}^{\rm T}$ (i.e., $k^{\rm T}$), randomly chosen from $\mathcal{Y}^{\rm P}$ prior to the minimization of Eq.~\eqref{eq:vp_obj}. Note that, since $f_{\rm out}^{\rm Rlm}$ is injective, it ensures $f_{\rm out}^{\rm Rlm}(y_1|\mathcal{Y}^{\rm P}_{\rm sub})\neq f_{\rm out}^{\rm Rlm}(y_2|\mathcal{Y}^{\rm P}_{\rm sub})$ for two distinct elements $y_1\neq y_2$. 

Other representative output-mapping methods determine $\mathcal{Y}^{\rm P}_{\rm sub}$ and $f_{\rm out}$ for different target tasks. For example, one is based on the frequency of label assignment in the pre-trained model and the target data \cite{tsai2020transfer}, called \emph{frequent label mapping} (Flm). \citet{chen2023understanding} propose \emph{iterative label mapping} (Ilm) that updates $f_{\rm out}$ in each training iteration, reflecting changes in label mapping throughout the learning of $f_{\rm in}$. Detailed procedures and the pseudo-code of ${f}_{\rm out}^{\rm Flm}$ and ${f}_{\rm out}^{\rm Ilm}$ are in Appendix~\ref{app:out}.

\begin{figure*}
    \centering
    \includegraphics[width=\linewidth]{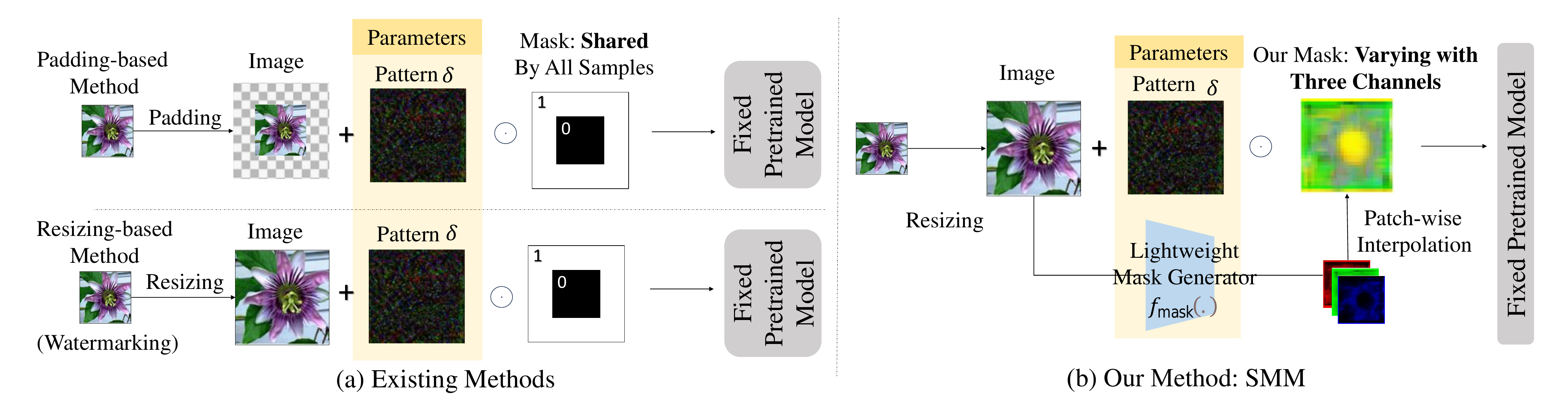}
    \caption{Comparison between (a) existing methods and (b) our method. Previous padding-based reprogramming adds zeros around the target image, while resizing-based reprogramming adjusts image dimensions to fit the required input size. Both methods use a pre-determined \emph{shared} mask to indicate the valid location of pattern $\delta$. Our method, on the other hand, takes a more dynamic and tailored approach. We resize each target image and apply a different three-channel mask accordingly, driven by a lightweight $f_{\rm mask}$ with an interpolation up-scaling module, allowing for more variability in individual samples.}
    \label{fig:framework}
\end{figure*}

\section{Sample-specific Multi-channel Masks} \label{method}
We focus on $f_{\rm in}$, while treating $f_{\rm out}$ as a non-parametric mapping, in line with \citet{chen2023understanding}. 
We thus limit our discussion of trainable parameters to $\theta\in \Theta$ in Eq.~\eqref{eq:vp_obj}.
A flowchart in Appendix \ref{prob} provides an overview of the problem structure of Input VR.
\subsection{Framework of SMM}

To allow both shared parameters over the whole dataset and variability among individual samples, it is intuitive to consider the following VR hypothesis: 
\begin{align}
f_{\rm in}(x_i|\phi,\delta)=r(x_i)+\delta \odot f_{\rm mask}(r(x_i)|\phi), 
\end{align}
where $r:\mathcal{X}^{\rm T}\rightarrow \mathbb{R}^{d_{\rm P}}$ is the resizing function, typically implemented as bilinear interpolation upsampling \cite{bilinear} that scales image dimension from $d_{\rm{T}}$ to $d_{\rm{P}}$, and $r(x_i)\in \mathbb{R}^{d_{\rm P}}$ is the resized image corresponding to $x_i$.
The mask generation function $f_{\rm mask}:{\mathbb{R}}^{d_{\rm P}}\rightarrow \mathbb{R}^{d_{\rm P}}$, parameterized by $\phi\in\Phi$, produces a mask indicating the noise placements for each image.
We denote a trainable noise pattern added to the image by $\delta \in \mathbb{R}^{d_{\rm P}}$.
The rationale for applying this hypothesis is elaborated in Proposition \ref{proposition1} and validated in ablation studies (cf.  Table~\ref{ablation}). 
This casts the training objective of our SMM framework ($\theta = \left\{ \phi, \delta \right\}$) to find the optimal $\phi^{*}$ and $\delta^{*}$ such that
\begin{equation}\label{eq:final_obj}
\begin{split}
    \mathop{\arg \min}\limits_{\phi\in\Phi, \delta\in\mathbb{R}^{d_{\rm P}}}\mathbb{E}_{(x_i, y_i) \sim \mathcal{D_{\rm T}}}[\ell( & f_{\rm out}(f_{\rm P}(r(x_i)+\\
    & \delta \odot f_{\rm mask}(r(x_i)|\phi))),y_i)].
\end{split}
\end{equation}
Note that $\delta$ is \emph{shared} by all images in the dataset following \citet{bahng2022exploring} and \citet{chen2023understanding}, while $f_{\rm mask}$ \emph{uniquely} generates sample-specific multi-channel masks for each individual image, enabling sample-specific adaptation.

Figure~\ref{fig:framework} illustrates the workflow of SMM, as well as previous padding-based and resizing-based (i.e., watermarking) VR methods.
Compared with previous works, SMM features $f_{\rm mask}(\cdot|\phi)$, integrating a \textit{mask generator module} and a \textit{patch-wise interpolation module}. 
Concretely, SMM starts by resizing target images, followed by their processing through the mask generator to create corresponding three-channel masks.
For generated masks smaller than the pattern size, the patch-wise interpolation module performs upsampling, which omits the derivation step in back-propagation and facilitates training.
Afterward, the learnable pattern $\delta$ is multiplied with the mask on a pixel-wise basis and added to the image. The resulting image is fed into the \emph{fixed} pre-trained classifier. 
We discuss further details on the mask generator (Section ~\ref{sec:generator}), the patch-wise interpolation module (Section~\ref{sec:patch_priority}), and the overall learning strategy presented in Eq.~\eqref{eq:final_obj} (Section~\ref{sec: learning_strategy}).

\subsection{Lightweight Mask Generator Module}\label{sec:generator}
The mask generator $f_{\rm mask}$ is supposed to output a mask that has the same size as the input image while prioritizing different locations for $\delta$ to allow more variability.
We employ a CNN as the mask generator. 
This choice stems from the proficiency of CNNs in mirroring localized visual perception \cite{he2016deep} with fewer parameters than most deep learning structures, e.g., multilayer perceptrons.

The input of CNN is a resized image $r(x_i)$. Applying our bespoke CNN architecture shown in Appendix \ref{architecture}, the output will be a three-channel mask with dimensions $\left \lfloor  \frac{H}{2^l}\right \rfloor  \times \left \lfloor \frac{W}{2^l}\right \rfloor $, where $H$ and $W$ denote image height and width, respectively, and $l$ denotes the number of pooling layers. 
The analysis of input/output sizes and parameter quantity statistics are in Appendix \ref{architecture}. 

\begin{algorithm}[tb]
   \caption{Visual Reprogramming with SMM}
   \label{alg:method} 
\begin{algorithmic}[1]
   \STATE {\bfseries Input:} Pre-trained model $f_{\rm P}$, loss $\ell$, label-mapping function $f^{(j)}_{\rm out}$ for iteration $j$, target domain training data $\{(x_i, y_i)\}^{n}_{i=1}$, maximum number of iterations $E$, learning rate $\alpha_1$ for $\delta$ and $\alpha_2$ for $\phi$
   \STATE {\bfseries Output:} Optimal $\delta^*, \phi^*$
   \STATE Initialize $\phi$ randomly; set $\delta \leftarrow \{0\}^{d_\mathcal{\rm P}}$
   \FOR{$j=1$ {\bfseries to} $E$}
   \STATE \# Step1: Compute individual marks using the mask generator\\
   \# Step2: Resize masks using the patch-wise interpolation module \\
     $f_{\rm in}(x_i;\delta, \phi)\leftarrow r(x_i) + \delta \odot f_{\rm mask}(r(x_i)|\phi)$, $\forall i=1, 2, ..., n$
   \STATE \# Compute the classification loss \\
   $L(\delta, \phi)\leftarrow\frac{1}{n}\sum^{n}_{i=1}\ell(f^{(j)}_{\rm out}{(f_{\rm P}(f_{\rm in}(x_i;\delta, \phi)))}, y_i)$
    \STATE $\delta \leftarrow \delta - \alpha_1 \nabla_\delta L(\delta, \phi)$
    \STATE $\phi \leftarrow \phi - \alpha_2 \nabla_\phi L(\delta, \phi)$
    \ENDFOR
\end{algorithmic}
\end{algorithm}
\subsection{Patch-wise Interpolation Module}\label{sec:patch_priority}
The patch-wise interpolation module upscales CNN-generated masks from $\left \lfloor  \frac{H}{2^l}\right \rfloor  \times \left \lfloor \frac{W}{2^l}\right \rfloor $ back to the original size $H \times W$ per channel (it is omitted when $l = 0$).
Considering the inherent consistency in adjacent image areas and the benefits of concise operations for gradient calculations, we employ a grid of $\left \lfloor  \frac{H}{2^l}\right \rfloor  \times \left \lfloor \frac{W}{2^l}\right \rfloor $ patches in the upsampling process, each sized $2^l \times 2^l$, ensuring the same values within each patch, with non-divisible cases mirroring the closest patches.
Therefore, after obtaining the output of CNN, we enlarge each pixel to $2^l \times 2^l$ pixels by padding the same value of a pixel to its surrounding areas within the patch.

Unlike traditional interpolation methods which may introduce complicated derivation computations, our module simplifies the training by directly assigning values. 
The advantage of patch-wise interpolation over traditional interpolation methods will be discussed in Appendix~\ref{app:interpolation}.
The effect of patch size $2^l$ will be discussed in Section \ref{exp}.

\subsection{Learning Strategy} \label{sec: learning_strategy}
The learning process for the shared noise pattern $\delta$ and the mask generator $f_{\rm mask}$ is shown in Algorithm \ref{alg:method}. The parameters $\delta$ and $\phi$ are iteratively updated in each epoch. To mitigate the impact of initialization, $\delta$ is set to be a zero matrix before training, noted as $\{0\}^{d_\mathcal{\rm P}}$.

\begin{table*}[!ht] 
\caption{Performance Comparison of Different Input Reprogramming Methods on Pre-trained ResNet (Mean \% ± Std \%, the average results across all datasets are highlighted in grey)}
\label{resnetresults}

\begin{center}
\begin{small}
\begin{sc}
\resizebox{\textwidth}{!}{
\begin{tabular}{c|ccccc|ccccc} 
\toprule
Pre-trained & \multicolumn{5}{c|}{ResNet-18 (ImageNet-1k)}                            & \multicolumn{5}{c}{ResNet-50 (ImageNet-1k)}               \\
\midrule
 Methods & Pad          & Narrow   & Medium   & Full     & Ours              & Pad  & Narrow   & Medium   & Full     & Ours              \\
\midrule
CIFAR10                        & 65.5  \scriptsize±0.1          & 68.6 \scriptsize±2.8 & 68.8 \scriptsize±1.1 & 68.9 \scriptsize±0.4 & \textbf{72.8 }\scriptsize±0.7 & 76.6\scriptsize±0.3 & 77.4\scriptsize±0.5 & 77.8\scriptsize+0.2 & 79.3\scriptsize±0.3 & \textbf{81.4}\scriptsize±0.6 \\
CIFAR100                       & 24.8\scriptsize±0.1          & 36.9\scriptsize±0.6 & 34.9\scriptsize±0.2 & 33.8\scriptsize±0.2 & \textbf{39.4}\scriptsize±0.6 & 38.9\scriptsize±0.3 & 42.5\scriptsize±0.2 & 43.8\scriptsize±0.2 & 47.2\scriptsize±0.1 & \textbf{49.0}\scriptsize±0.2 \\
SVHN                           & 75.2\scriptsize±0.2          & 58.5\scriptsize±1.1 & 71.1\scriptsize±1.0 & 78.3\scriptsize±0.3 & \textbf{84.4}\scriptsize±2.0 & 75.8\scriptsize±0.4 & 59.1\scriptsize±1.3 & 71.5\scriptsize±0.8 & 79.5\scriptsize±0.5 & \textbf{82.6}\scriptsize±2.0 \\
GTSRB                          & 52.0\scriptsize±1.2          & 46.1\scriptsize±1.5 & 56.4\scriptsize±1.0 & 76.8\scriptsize±0.9 & \textbf{80.4}\scriptsize±1.2 & 52.5\scriptsize±1.4 & 38.9\scriptsize±1.3 & 52.6\scriptsize±1.3 & 76.5\scriptsize±1.3 & \textbf{78.2}\scriptsize±1.1 \\
Flowers102                     & 27.9\scriptsize±0.7          & 22.1\scriptsize±0.1 & 22.6\scriptsize±0.5 & 23.2\scriptsize±0.5 & \textbf{38.7}\scriptsize±0.7 & 24.6\scriptsize±0.6 & 19.9\scriptsize±0.6 & 20.9\scriptsize±0.6 & 22.6\scriptsize±0.1 & \textbf{35.9}\scriptsize±0.5 \\
DTD                            & \textbf{35.3}\scriptsize±0.9 & 33.1\scriptsize±1.3 & 31.7\scriptsize±0.5 & 29.0\scriptsize±0.7 & 33.6\scriptsize±0.4          & 40.5\scriptsize±0.5 & 37.8\scriptsize±0.7 & 38.4\scriptsize±0.2 & 34.7\scriptsize±1.3 & \textbf{41.1}\scriptsize±1.1 \\
UCF101                         & 23.9\scriptsize±0.5          & 27.2\scriptsize±0.9 & 26.1\scriptsize±0.3 & 24.4\scriptsize±0.9 & \textbf{28.7}\scriptsize±0.8 & 34.6\scriptsize±0.2 & 38.4\scriptsize±0.2 & 37.2\scriptsize±0.2 & 35.2\scriptsize±0.2 & \textbf{38.9}\scriptsize±0.5 \\
Food101                        & 14.8\scriptsize±0.2          & 14.0\scriptsize±0.1 & 14.4\scriptsize±0.3 & 13.2\scriptsize±0.1 & \textbf{17.5}\scriptsize±0.1 & 17.0\scriptsize±0.3 & 18.3\scriptsize±0.2 & 18.3\scriptsize±0.2 & 16.7\scriptsize±0.2 & \textbf{19.8}\scriptsize±0.0 \\
SUN397                         & 13.0\scriptsize±0.2          & 15.3\scriptsize±0.1 & 14.2\scriptsize±0.1 & 13.4\scriptsize±0.2 & \textbf{16.0}\scriptsize±0.3 & 20.3\scriptsize±0.2 & 22.0\scriptsize±0.1 & 21.5\scriptsize±0.1 & 21.1\scriptsize±0.1 & \textbf{22.9}\scriptsize±0.0 \\
EuroSAT                        & 85.2\scriptsize±0.6          & 82.8\scriptsize±0.4 & 83.8\scriptsize±0.5 & 84.3\scriptsize±0.5 & \textbf{92.2}\scriptsize±0.2 & 83.6\scriptsize±0.7 & 83.7\scriptsize±0.4 & 85.8\scriptsize±0.1 & 86.9\scriptsize±0.3 & \textbf{92.0}\scriptsize ±0.6\\
OxfordPets                     & 65.4\scriptsize±0.7          & 73.7\scriptsize±0.2 & 71.4\scriptsize±0.2 & 70.0\scriptsize±0.6 & \textbf{74.1}\scriptsize±0.4 & 76.2\scriptsize±0.6 & 76.4\scriptsize±0.3 & 75.6\scriptsize±0.3 & 73.4\scriptsize±0.3 & \textbf{78.1}\scriptsize±0.2 \\
\rowcolor{gray!30}
Average & 43.91 & 43.48 & 45.04 & 46.85 & \textbf{52.53} & 49.15 & 46.76 & 49.39 & 52.10 & \textbf{56.35}
\\
\bottomrule
\end{tabular}}
\end{sc}
\end{small}
\end{center}
\end{table*}

\section{Understanding Masks in Visual Reprogramming for Classification}

In this section, we will demonstrate that SMM enables stronger model learning capacity than previous representative VR methods, via showing reduced approximation error in the \emph{probably approximately correct} (PAC) learning framework \cite{kearns1994introduction}. We first present the definition of the approximation error in PAC learning. 

\begin{definition}[Approximation Error] \label{definition1}
Consider an input space $\mathcal{X}$, a discrete label space $\mathcal{Y}$, a random variable $(X,Y)$ whose distribution $\mathcal{D}$ is defined on $\mathcal{X}\times\mathcal{Y}$ with a joint probability density function $p(x,y)$, and a hypothesis space $\mathcal{F}=\{f:\mathcal{X}\rightarrow\mathcal{Y}\}$. 
The approximation error of $\mathcal{F}$ on $\mathcal{D}$ is 
\begin{align}
    {\rm Err}^{\rm apx}_{\mathcal{D}}(\mathcal{F}) = \inf_{f\in\mathcal{F}}\mathbb{E}_{(X,Y)\sim\mathcal{D}}\ell(f(X),Y) - R^*_{\mathcal{D}},
\end{align}
where $\ell:\mathcal{Y}\times\mathcal{Y}\rightarrow\mathbb{R}^+\cup\{0\}$ is a loss function, and $R^*_{\mathcal{D}}$ is the Bayes risk \cite{Snapp1995est} on $\mathcal{D}$ defined by
\begin{align}
    R^*_{\mathcal{D}} = \int_\mathcal{X} \Big[1-\sup_{y\in\mathcal{Y}}{\rm Pr}(y|x)\Big]p_X(x)dx. 
\end{align}
Here, ${\rm Pr}(y|x)$ is the posterior probability of class $y$ conditioned on observing $x$, and $p_X(x)=\sum_{y\in\mathcal{Y}}p(x,y)$ is the marginal distribution of $X$.
\end{definition}
The approximation error of a hypothesis space $\mathcal{F}$ measures the closeness of the minimum achievable error by $\mathcal{F}$ to the theoretical minimum error on distribution $\mathcal{D}$. In general, increasing the complexity of $\mathcal{F}$ tends to reduce the approximation error. In the following theorem, we show a connection between two approximation errors when hypothesis spaces exhibit a subset relation.
\begin{theorem}\label{theorem1}
    Given an input space $\mathcal{X}$, a discrete label space $\mathcal{Y}$, and a distribution $\mathcal{D}$ over $\mathcal{X}\times\mathcal{Y}$, if there are two hypothesis spaces $\mathcal{F}_1\subseteq\{f:\mathcal{X}\rightarrow\mathcal{Y}\}$ and $\mathcal{F}_2\subseteq\{f:\mathcal{X}\rightarrow\mathcal{Y}\}$ satisfying that $\mathcal{F}_1\subseteq\mathcal{F}_2$, then we have ${\rm Err}^{\rm apx}_{\mathcal{D}}(\mathcal{F}_1)\geq {\rm Err}^{\rm apx}_{\mathcal{D}}(\mathcal{F}_2)$.
\end{theorem}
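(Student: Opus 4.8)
The plan is to exploit the fact that the Bayes risk $R^*_{\mathcal{D}}$ appearing in Definition~\ref{definition1} depends only on the distribution $\mathcal{D}$ and not at all on the hypothesis space under consideration. Consequently, subtracting $R^*_{\mathcal{D}}$ amounts to applying the same constant shift in both ${\rm Err}^{\rm apx}_{\mathcal{D}}(\mathcal{F}_1)$ and ${\rm Err}^{\rm apx}_{\mathcal{D}}(\mathcal{F}_2)$, so the claimed inequality reduces entirely to comparing the two infima $\inf_{f\in\mathcal{F}_1}\mathbb{E}_{(X,Y)\sim\mathcal{D}}\ell(f(X),Y)$ and $\inf_{f\in\mathcal{F}_2}\mathbb{E}_{(X,Y)\sim\mathcal{D}}\ell(f(X),Y)$.

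The central step is the elementary monotonicity of the infimum under set inclusion. Writing $L(f) = \mathbb{E}_{(X,Y)\sim\mathcal{D}}\ell(f(X),Y)$ for the population risk of a hypothesis $f$, the assumption $\mathcal{F}_1\subseteq\mathcal{F}_2$ means the value set $\{L(f):f\in\mathcal{F}_1\}$ is contained in $\{L(f):f\in\mathcal{F}_2\}$. Since the infimum over a larger set cannot exceed the infimum over a smaller subset, I would conclude $\inf_{f\in\mathcal{F}_2} L(f) \leq \inf_{f\in\mathcal{F}_1} L(f)$. Subtracting the common constant $R^*_{\mathcal{D}}$ from both sides preserves the inequality and yields ${\rm Err}^{\rm apx}_{\mathcal{D}}(\mathcal{F}_2) \leq {\rm Err}^{\rm apx}_{\mathcal{D}}(\mathcal{F}_1)$, which is precisely the statement.

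There is essentially no hard part here: the result is a direct consequence of the order-reversing behaviour of $\inf$ with respect to domain enlargement, together with the observation that the Bayes term is a shared constant. The only subtlety worth a remark is ensuring each $L(f)$ and hence each infimum is well-defined; this is guaranteed because $\ell$ takes values in $\mathbb{R}^+\cup\{0\}$, so $L(f)\geq 0$ for every $f$ and each infimum exists in $[0,\infty]$. No continuity, attainment of the infimum, nor any structural assumption on $\mathcal{F}_1$ or $\mathcal{F}_2$ beyond the inclusion is required.
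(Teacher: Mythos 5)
Your argument is correct and matches the paper's own proof in Appendix B.1: both reduce the claim to the monotonicity of the infimum under set inclusion and note that the Bayes risk is a common constant independent of the hypothesis space. Your added remark on well-definedness of the infima via nonnegativity of $\ell$ is a minor (and harmless) refinement not present in the paper.
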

Theorem \ref{theorem1} (proof in Appendix \ref{app:approx}) shows that understanding the subset relation between two hypothesis spaces is key to deriving their connections in their approximation errors.
Next, we will define two hypothesis spaces: one induced by a shared mask and the other induced by SMM. 

\textbf{Hypothesis Space Induced by A Shared Mask}. VR methods with a shared mask \cite{chen2022survey,bahng2022exploring} assume that, for each sample $x_i$, the mask is a constant matrix $M\in \{0, 1\}^{d_\mathcal{\rm P}}$. Thus, given a fixed pre-trained model $f_{\rm P}$ and a fixed output mapping function $f_{\rm out}$ (for simplicity, we use $f_{\rm P}^\prime$ to represent $f_{\rm out}\circ f_{\rm P}$ in this section), the hypothesis space induced by a shared mask is
\begin{align*}
    \mathcal{F}^{\rm shr}({f_{\rm P}^\prime})=\{f|f(x) = f_{\rm P}^\prime(r(x)+M\odot\delta),\forall x\in\mathcal{X}\},
\end{align*}
where $\delta\in\mathbb{R}^{d_{\rm P}}$. In padding-based reprogramming methods, $M$ is a fixed mask determined by the location of the target image \cite{chen2022survey}. The locations where $x_i$ is placed -- usually the center of $r(x_i)$ -- are denoted as $\{i:M_i=0\}$, which are excluded from further training. The rest of the locations, denoted by $\{i:M_i=1\}$, indicate trainable parameters $\delta$. In watermarking-based methods \cite{bahng2022exploring}, $x_i$ is up-sampled to $r(x_i)$, and $\{i:M_i=1\}$ denotes effective locations of $\delta$ added to $r(x_i)$.

\textbf{Hypothesis Space Induced by SMM}. Based on Eq.~\eqref{eq:final_obj}, we can obtain the hypothesis space used in SMM:
\begin{align*}
    &\mathcal{F}^{\rm smm}({f_{\rm P}^\prime})\\
    =&~\{f|f(x) = {f_{\rm P}^\prime}(r(x)+f_{\rm mask}(r(x))\odot\delta),\forall x\in\mathcal{X}\}.
\end{align*}
Note that, $f_{\rm mask}(r(x))$ belongs to $\mathbb{R}^{d_{\rm P}}$ instead of $\{0, 1\}^{d_\mathcal{\rm P}}$ like $M$. Next, we analyze the relation between the approximation errors of previous VR methods and SMM.

\textbf{SMM Has a Lower Approximation Error.} Based on Theorem~\ref{theorem1} and the two hypothesis spaces above, we have the following proposition.

\begin{proposition} \label{proposition1}
    Given a fixed pre-trained model $f_{\rm P}$, a fixed output mapping function $f_{\rm out}$, and the definitions of $\mathcal{F}^{\rm shr}$ and $\mathcal{F}^{\rm smm}$, we have $\mathcal{F}^{\rm shr}(f_{P}^\prime)\subseteq\mathcal{F}^{\rm smm}(f_{P}^\prime)$. Then, based on Theorem~\ref{theorem1}, we have
\begin{align}
    {\rm Err}^{\rm apx}_{\mathcal{D}_{\rm T}}(\mathcal{F}^{\rm shr}(f_{P}^\prime)) \ge {\rm Err}^{\rm apx}_{\mathcal{D}_{\rm T}}(\mathcal{F}^{\rm smm}(f_{P}^\prime)),
\end{align}
where $f_{P}^\prime=f_{\rm out}\circ f_{\rm P}$, $f_{\rm mask}$ used in $\mathcal{F}^{\rm smm}(f_{P}^\prime)$ is a CNN demonstrated in Section~\ref{sec:generator}, and $\mathcal{D}_{\rm T}$ denotes the distribution of the target task.
\end{proposition}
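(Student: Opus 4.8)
The plan is to reduce the entire statement to a single set inclusion, namely $\mathcal{F}^{\rm shr}(f_{\rm P}^\prime)\subseteq\mathcal{F}^{\rm smm}(f_{\rm P}^\prime)$, and then invoke Theorem~\ref{theorem1} verbatim with $\mathcal{F}_1=\mathcal{F}^{\rm shr}(f_{\rm P}^\prime)$, $\mathcal{F}_2=\mathcal{F}^{\rm smm}(f_{\rm P}^\prime)$, and $\mathcal{D}=\mathcal{D}_{\rm T}$. Once the inclusion holds, the approximation-error inequality is an immediate consequence, so essentially all the work lies in proving the inclusion.

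First I would fix an arbitrary $f\in\mathcal{F}^{\rm shr}(f_{\rm P}^\prime)$. By definition there is some $\delta_0\in\mathbb{R}^{d_{\rm P}}$ with $f(x)=f_{\rm P}^\prime(r(x)+M\odot\delta_0)$ for all $x$, where $M\in\{0,1\}^{d_{\rm P}}$ is the fixed shared mask. To exhibit a matching member of $\mathcal{F}^{\rm smm}(f_{\rm P}^\prime)$, it suffices to find CNN parameters $\phi$ and a pattern $\delta$ such that $f_{\rm mask}(r(x)|\phi)\odot\delta=M\odot\delta_0$ for every $x$. The natural choice is to make $f_{\rm mask}(\cdot|\phi)$ the constant map equal to $M$ and to set $\delta=\delta_0$; then the two arguments fed to $f_{\rm P}^\prime$ coincide pointwise and the two functions agree on all of $\mathcal{X}$, giving $f\in\mathcal{F}^{\rm smm}(f_{\rm P}^\prime)$.

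The crux, and the step I expect to be the main obstacle, is verifying that the CNN of Section~\ref{sec:generator} can indeed represent a constant (input-independent) function whose value is the prescribed binary mask $M$. I would argue this by explicit construction: zeroing every convolutional filter weight makes each layer's output depend only on its bias, so the network output no longer depends on $r(x)$, and the remaining biases together with the final layer can then be chosen so that the output equals $M$ at every coordinate. Because the codomain of $f_{\rm mask}$ is $\mathbb{R}^{d_{\rm P}}$ rather than $\{0,1\}^{d_{\rm P}}$, the values $0$ and $1$ lie in its range and $M$ is attained exactly. A subtlety worth addressing is the pooling and patch-wise interpolation: pooling a constant feature map returns a constant, and the patch-wise upsampling simply replicates a constant value across each patch, so constants are preserved end to end. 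In the $l=0$ regime the interpolation is absent and any $M$ is reproduced verbatim; for $l>0$ one should note that the masks arising in the padding and watermarking constructions are piecewise constant on the relevant patches, and hence remain representable.

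Finally, since $f$ was an arbitrary element of $\mathcal{F}^{\rm shr}(f_{\rm P}^\prime)$, I conclude $\mathcal{F}^{\rm shr}(f_{\rm P}^\prime)\subseteq\mathcal{F}^{\rm smm}(f_{\rm P}^\prime)$, and Theorem~\ref{theorem1} then yields ${\rm Err}^{\rm apx}_{\mathcal{D}_{\rm T}}(\mathcal{F}^{\rm shr}(f_{\rm P}^\prime))\ge{\rm Err}^{\rm apx}_{\mathcal{D}_{\rm T}}(\mathcal{F}^{\rm smm}(f_{\rm P}^\prime))$, which is exactly the claimed bound.
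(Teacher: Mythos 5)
Your proposal is correct and follows essentially the same route as the paper's own proof: both establish the inclusion $\mathcal{F}^{\rm shr}(f_{\rm P}^\prime)\subseteq\mathcal{F}^{\rm smm}(f_{\rm P}^\prime)$ by zeroing the convolutional weights so that $f_{\rm mask}$ reduces to a constant map realized by the (last-layer) bias, which can be set to reproduce $M$, and then apply Theorem~\ref{theorem1}. If anything, you are slightly more careful than the paper in flagging that for $l>0$ the patch-wise interpolation restricts the representable constants to masks that are piecewise constant on the $2^l\times 2^l$ patches, a point the paper's proof glosses over by assuming the CNN output already matches the input size.
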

Proposition \ref{proposition1} (see its proof in Appendix \ref{app:smm_shr}) shows that SMM achieves a lower approximation error than previous shared-mask VR methods.

\textbf{Estimation Error Analysis of SMM.}
While a lower approximation error does not suffice to guarantee a lower excess risk, the model complexity added to $\mathcal{F}^{\rm smm}(f_{P}^\prime)$ is manageable in this VR setting, since $f_{\rm mask}$ introduces less than $0.2\%$ extra parameters\footnote{See Table~\ref{param_statstic} for statistics on network sizes.} relative to $f_{\rm P}^{\prime}$. 
Such dominance of $f_{\rm P}^{\prime}$ suggests that the estimation error of $\mathcal{F}^{\rm smm}(f_{P}^\prime)$ does not significantly exceed that of $\mathcal{F}^{\rm shr}(f_{P}^\prime)$ and is unlikely to offset its advantage in approximation error.
We also provide an empirical justification from the standpoint of over-fitting to show that the additional estimation error of $\mathcal{F}^{\rm smm}(f_{P}^\prime)$ is negligible in Appendix~\ref{app:error}.
By comparing the disparities in training and testing performance, we demonstrate that SMM does not increase the risk of model over-fitting, implying negligible estimation error.

\textbf{Excess Risk Analysis of SMM.}
According to excess risk decomposition\footnote{The excess risk is equal to the sum of approximation error and estimation error \cite{excessrisk}.}, SMM is also expected to have a lower excess risk and, consequently, superior generalization capability compared to shared-mask VR methods.

\textbf{Analysis Based on Sample-specific Patterns.}
Having built the concept of ``sample-specific'', we also investigate an alternative to the proposed SMM: directly learning a \emph{sample-specific pattern} for each image without involving $\delta$. 
The hypothesis space in this context can be expressed by
\begin{align*}\label{eq:ssm_wo_m}
    \mathcal{F}^{\rm sp}({f_{\rm P}^\prime})
    =\{f|f(x) = {f_{\rm P}^\prime}(r(x)+f_{\rm mask}(r(x))),\forall x\in\mathcal{X}\}.
\end{align*}
It is easy to check that $\mathcal{F}^{\rm sp}({f_{ P}^\prime})\subseteq \mathcal{F}^{\rm smm}({f_{ P}^\prime})$, implying that ${\rm Err}^{\rm apx}_{\mathcal{D}_{\rm T}}(\mathcal{F}^{\rm sp}(f_{P}^\prime)) \ge {\rm Err}^{\rm apx}_{\mathcal{D}_{\rm T}}(\mathcal{F}^{\rm smm}(f_{P}^\prime))$ (proof in Appendix \ref{app:smm_sp}). Namely, SMM has a lower approximation error compared to directly learning a sample-specific pattern. 

\section{Experiments} \label{exp}
\textbf{Pre-trained Models and Target Tasks.} Following \citet{chen2023understanding}, we use ResNet-18, and ResNet-50 \cite{he2016deep} as the pre-trained model. Performance on pre-trained ViT-B32 \cite{dosovitskiy2020image} is also tested. All these models are pre-trained on ImageNet-1K \cite{deng2009imagenet}, and target tasks include CIFAR10, CIFAR100 \cite{krizhevsky2009learning}, SVHN \cite{yuval2011reading}, GTSRB \cite{houben2013detection}, Flowers102 \cite{nilsback2008automated}, DTD \cite{cimpoi2014describing}, UCF101 \cite{soomro2012ucf101}, Food101 \cite{bossard2014food}, EuroSAT \cite{helber2019eurosat}, OxfordPets \cite{parkhi2012cats}, SUN397 \cite{xiao2010sun}. Moreover, StanfordCars \cite{krause20133d}, which is revealed to be unsuitable for VR, is also discussed in Appendix \ref{stanfordappend}. We follow \citet{chen2023understanding} to split the datasets. Detailed dataset information is included in Appendix \ref{trainparam}.

\textbf{Baselines.} We compare our method with both padding-based \cite{chen2023understanding} and resizing-based methods \cite{bahng2022exploring}, including:  (1) \emph{Pad}: centering the original image and adding the noise pattern around the images, (2) \emph{Narrow}: adding a narrow padding binary mask with a width of 28 ($\frac{1}{8}$ of the input image size) to the noise pattern that covers the whole image (watermark), (3) \emph{Medium}: adding a mask being a quarter of the size (the width is 56) of watermarks
and (4) \emph{Full}: full watermarks that cover the whole images following \citet{wang2022watermarking}. To ensure that all the methods are fairly compared, in training the shared noise pattern, we apply the same learning rate and milestones following \citet{chen2023understanding}, with 0.01 being the initial learning rate and 0.1 being the learning rate decay. Two hundred epochs are run in total, and the 100$th$ and the 145$th$ epochs are the milestones. The training details of the mask generator are included in Appendix \ref{trainparam}. Experiments are run with three seeds on a single A100 GPU and the averaged test accuracy is reported.
Due to page limits, we report here only the results obtained with the output mapping ${f}_{\rm out}^{\rm Ilm}$.
See Appendix \ref{Asec:diff_f} for the results using ${f}_{\rm out}^{\rm Rlm}$ and ${f}_{\rm out}^{\rm Flm}$.

\begin{figure*}[t]
    \centering
    \includegraphics[width=\linewidth]{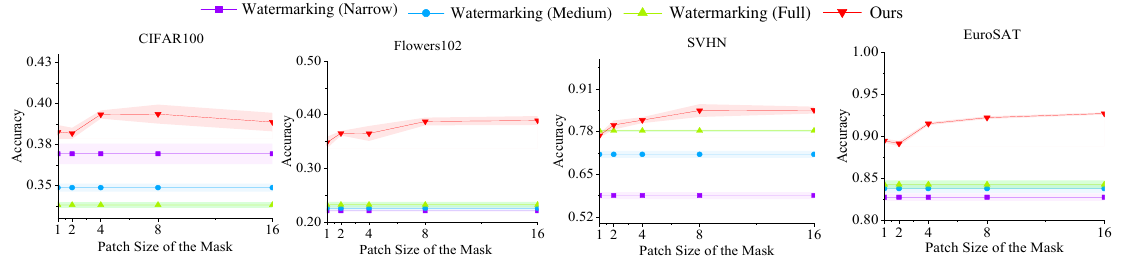}
    \caption{Comparative results of different patch sizes ($2^l$). ResNet-18 is used as the pre-trained model as an example.} 
    \label{fig:patech_size}
\end{figure*}

\textbf{Results on ResNets.} Table \ref{resnetresults} reports the accuracy of ResNet-18 and ResNet-50 using VR methods with the baseline shared marks and our proposed SMM method. It can be observed that our SMM yields higher accuracy for both models on all datasets tested except for ResNet-18 on DTD. The advantage is more pronounced on the datasets where the target domains are more different from the original domain, such as SVHN, Flowers102, and EuroSAT. 
On SVHN, 6.1\% and 3.1\% improvements have been witnessed for ResNet-18 and ResNet-50, respectively, while over 10\% improvement is observed on the Flowers102. 
On DTD, the padding-based method has better results for ResNet-18. This is likely to be due to the noisy watermarks adversely impacting the texture that needs to be classified, leading to the disadvantages of resizing-based methods. 
Even in this challenging setting, our SMM method leads to higher accuracy when applied on the larger pre-trained model ResNet-50. 

\begin{table}[t] 
\caption{Performance Comparison of Different Input Reprogramming Methods on Pre-trained ViT (Mean \%, the average results are highlighted in grey)}
\label{ViTresults}

\begin{center}
\begin{footnotesize}
    
\begin{sc}
\resizebox{0.48\textwidth}{!}{
\begin{tabular}{c|p{1cm}p{1cm}p{1cm}p{1cm}p{1cm}}
\toprule
Pre-trained & \multicolumn{5}{c}{ViT-B32 (ImageNet-1k)}              \\
\midrule
Method     & Pad       & Narrow & Medium & Full & Ours          \\
\midrule
CIFAR10    & 62.4          & 96.6   & 96.5   & 95.8 & \textbf{97.4} \\
CIFAR100   & 31.6          & 74.4   & 75.3   & 75.0 & \textbf{82.6} \\
SVHN       & 80.2          & 85.0   & 87.4   & 87.8 & \textbf{89.7} \\
GTSRB      & 62.3          & 57.8   & 68.6   & 75.5 & \textbf{80.5} \\
Flowers102 & 57.3          & 55.3   & 56.6   & 55.9 & \textbf{79.1} \\
DTD        & 43.7          & 37.3   & 38.5   & 37.7 & \textbf{45.6} \\
UCF101     & 33.6          & 44.5   & \textbf{44.8}   & 40.9 & 42.6 \\
Food101    & 37.4          & 47.3   & 48.6   & 49.4 & \textbf{64.8} \\
SUN397     & 21.8          & 29.0   & 29.4   & 28.8 & \textbf{36.7} \\
EuroSAT    & \textbf{95.9} & 90.9   & 90.9   & 89.1 & 93.5          \\
OxfordPets & 57.6          & 82.5   & 81.0   & 75.3 & \textbf{83.8} \\
\rowcolor{gray!30}
Average & 53.1 & 63.7 & 65.2 & 64.7 & \textbf{72.4} \\
\bottomrule
\end{tabular}}
\end{sc}
\end{footnotesize}
\end{center}
\end{table}

\begin{table}[t] 
\caption{Ablation Studies (Mean \% ± Std \%, with ResNet-18 as an example, and the average results are highlighted in grey)}
\label{ablation}

\begin{center}
\begin{footnotesize}
\begin{sc}
\resizebox{0.47\textwidth}{!}{
\begin{tabular}{c|cccc}
\toprule
\multicolumn{1}{l|}{} & \thead{only \\ $\delta$} & \thead{only \\ $f_{\rm mask}$} & \thead{Single-\\channel \\ $f_{\rm mask}^{\rm s}$} & \thead{Ours}     \\
\midrule
CIFAR10              & 68.9\scriptsize±0.4      & 59.0\scriptsize±1.6            & 72.6\scriptsize±2.6       & \textbf{72.8}\scriptsize±0.7 \\
CIFAR100             & 33.8\scriptsize±0.2      & 32.1\scriptsize±0.3            & 38.0\scriptsize±0.6       & \textbf{39.4}\scriptsize±0.6 \\
SVHN                 & 78.3\scriptsize±0.3      & 51.1\scriptsize±3.1            & 78.4\scriptsize±0.2       & \textbf{84.4}\scriptsize±2.0 \\
GTSRB                & 76.8\scriptsize±0.9      & 55.7\scriptsize±1.2            & 70.7\scriptsize±0.8       & \textbf{80.4}\scriptsize±1.2 \\
Flowers102           & 23.2\scriptsize±0.5      & 32.2\scriptsize±0.4            & 30.2\scriptsize±0.4       & \textbf{38.7}\scriptsize±0.7 \\
DTD                  & 29.0\scriptsize±0.7      & 27.2\scriptsize±0.5            & 32.7\scriptsize±0.5       & \textbf{33.6}\scriptsize±0.4 \\
UCF101               & 24.4\scriptsize±0.9      & 25.7\scriptsize±0.3            & 28.0\scriptsize±0.3       & \textbf{28.7}\scriptsize±0.8 \\
Food101              & 13.2\scriptsize±0.1      & 13.3\scriptsize±0.1            & 15.8\scriptsize±0.1       & \textbf{17.5}\scriptsize±0.1 \\
SUN397               & 13.4\scriptsize±0.2      & 10.5\scriptsize±0.1            & 15.9\scriptsize±0.1       & \textbf{16.0}\scriptsize±0.3 \\
EuroSAT              & 84.3\scriptsize±0.5      & 89.2\scriptsize±0.9            & 90.6\scriptsize±0.5       & \textbf{92.2}\scriptsize±0.2 \\
OxfordPets           & 70.0\scriptsize±0.6      & 72.5\scriptsize±0.3            & 73.8\scriptsize±0.6       & \textbf{74.1}\scriptsize±0.4 \\
\rowcolor{gray!30}
Average & 46.85 & 42.59 & 49.70 & \textbf{52.53} \\
\bottomrule
\end{tabular}}
\end{sc}
\end{footnotesize}
\end{center}
\end{table}

\textbf{Results on ViT.}
Recall that input reprogramming can be applied to diverse pre-trained classifiers, we next turn our focus on ViT. Detailed in Table \ref{ViTresults}, our comparative study with baselines reveals substantial performance gains in datasets like Flowers102~(21.8\%), Food101~(15.4\%), and SUN397~(7.3\%). 
These results suggest that SMM may yield even higher performance gains for larger pre-trained models.
Exceptions do exist, like on EuroSAT, where all resizing-based methods show marginal under-performance, possibly a result of over-fitting on relatively simpler datasets. 
On UCF101, our SMM initially lags behind other strategies like narrow or medium masking but, after choosing appropriate learning rate parameters (See Appendix \ref{trainparam}), could achieve a leading 49.9\% accuracy.
Overall, the experiments above show the applicability of SMM over different pre-trained models and target domains. 
Abnormal cases of SMM in Table~\ref{resnetresults} and Table~\ref{ViTresults} will be further discussed in Appendix~\ref{app:abnormal}.
Next, we report ablation and parameter study results.
\begin{figure*}[ht]
    \centering
    \includegraphics[width=\linewidth]{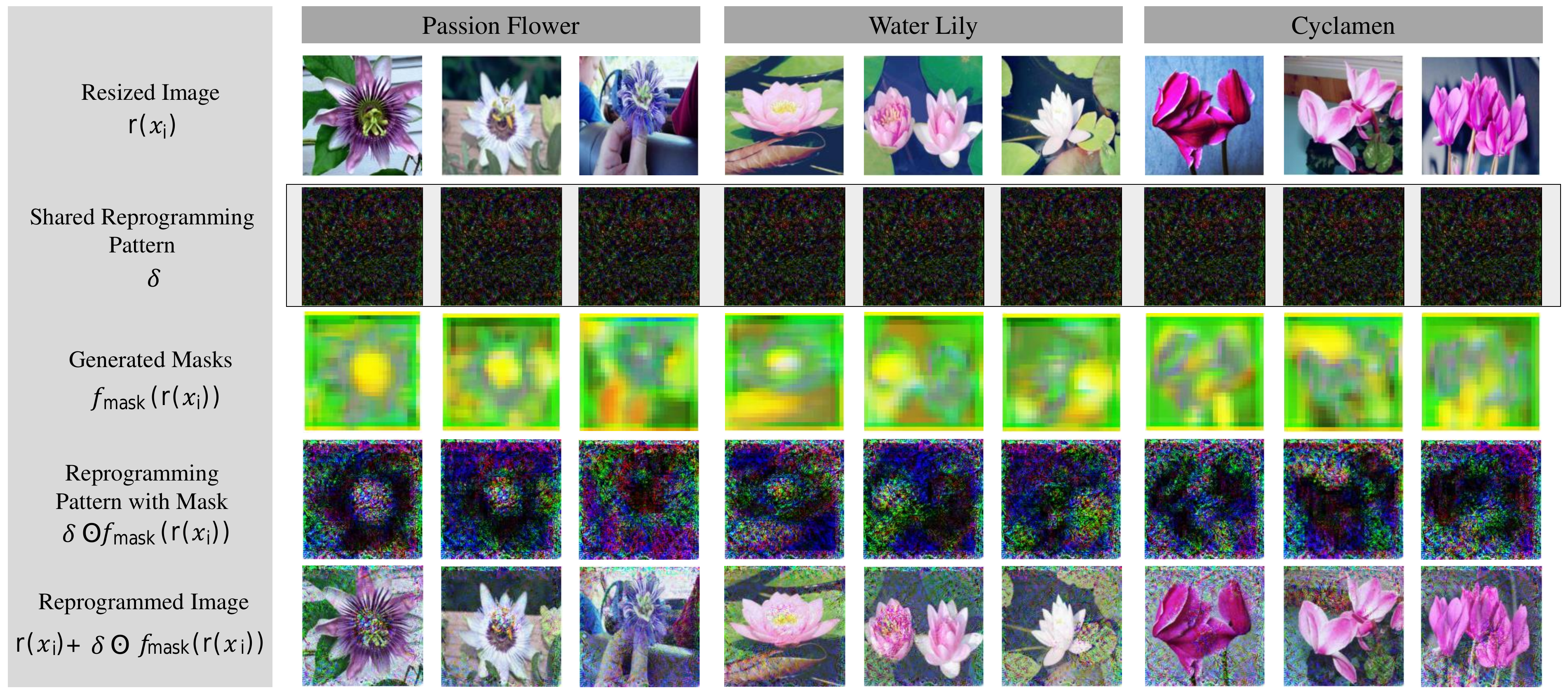}
    \caption{Visual results of trained VR on the Flowers102 dataset. To show the difference in results, the original image, result image and SMM adopt histogram equalization. ResNet-18 is used as the pre-trained model as an example. Other visualization results and further analysis are included in Appendix \ref{visualresults}.}
    \label{fig:visualization}
\end{figure*}

\begin{figure}[!ht]
    \centering
    \includegraphics[width=\linewidth]{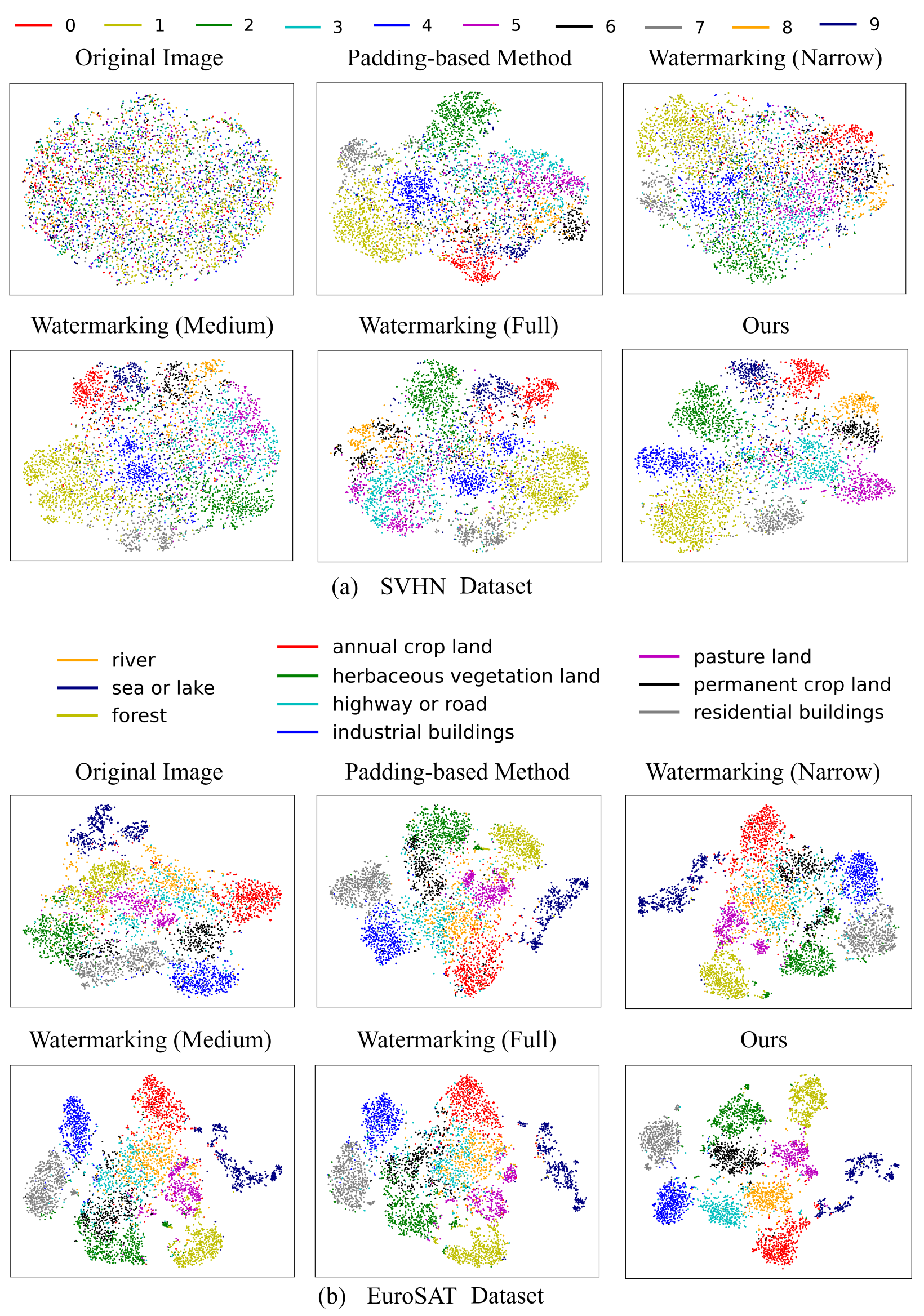}
    \caption{TSNE visualization results of the feature space on (a) SVHN and (b) EuroSAT datasets. ResNet-18 is used as the pre-trained model as an example.} 
    \label{fig:tsne}
\end{figure}
\textbf{Impact of Masking.} \label{sec: ablation}
We first investigate the impact of different masking strategies.
We take three variants against the proposed SMM into comparison:
(i) Shared-pattern VR $f_{\rm in}(x_i)=r(x_i)+\delta$, with $M$ being an all-one matrix equal to the image dimension for maximal flexibility in $\delta$. 
It defaults to the ``full watermarks'' baseline without using $f_{\rm mask}$.
(ii) Sample-specific pattern without masking $f_{\rm in}(x_i)=r(x_i)+f_{\rm mask}(r(x_i))$.
(iii) Single-channel version of SMM $f_{\rm in}(x_i)=r(x_i)+\delta \odot f^{\rm s}_{\rm mask}(r(x_i))$, averaging the penultimate-layer output of the mask generator.
These variants refer to the first three columns of Table \ref{ablation}, respectively.
They help evaluate the impact of sample specificity, masking, and multiple channels introduced by SMM in the context of input VR.

As shown in Table \ref{ablation}, SMM consistently stands out as the best performer on all datasets. A key observation is that only keeping shared pattern $\delta$ reduces VR effectiveness in feature-rich datasets (e.g., CIFAR10, Flowers102, and UCF101). Besides, using only $f_{\rm mask}$ without $\delta$, leads to suboptimal performance on datasets with enough training data per class, including CIFAR10, SVHN, GTSRB, and SUN397. Moreover, the single-channel method is less effective, especially on datasets where images have fewer varying color palettes (e.g., GTSRB and Flowers102). Overall, we find that the shared noise in SMM boosts model performance if sufficient training data is provided, whereas the sample-specific $f_{\rm mask}$ enables specificity for classification tasks demanding detailed feature discrimination. Lastly, the multi-channel allows for adjusting to channel-specific priorities.

\textbf{Impact of Patch Size.}
As an important hyperparameter in SMM, number of Max-Pooling layers, $l$, can vary, which means different patch sizes $2^l$. Since the 5-layer mask generator neural network has at most 4 Max-Pooling layers, we examine the impact of patch sizes in $\{2^0$, $2^1$, $2^2$, $2^3$, $2^4\}$. Results are shown in Figure \ref{fig:patech_size}. As the patch size increases, the accuracy of the SMM increases first, followed by a plateau or decline. This suggests that overly small patches may cause over-fitting, while overly large patch sizes could result in a loss of details in SMM. We thus have set the patch size to be 8 across all datasets.

\textbf{Visualization of SMM, shared patterns and output reprogrammed images.} Visualization results on Flowers102 dataset is shown in Figure \ref{fig:visualization}. It can be observed that when classifying passion flowers, where pedals are important for classification accuracy, the masks tend to mask out the noise pattern over the pedals, which protects useful information from being shadowed by noise. Other features such as flower pistils in passion flowers are also widely present in various similar classes such as ‘oxeye’, ‘daisy’ and ‘orange dahlia’, making the centers of flowers potential sources of interference in classification. Thus, for passion flowers, noise in the center of the flowers is not masked out. When classifying `water lily', SMM will enhance the noise on interfering objects in the image. Similarly, when classifying ‘cyclamen’, similar stems are also commonly found in other classes such as ‘gaura’ and ‘rose’, which hinders accurate classification. Therefore, it is reasonable for SMM to introduce more noise to these interfering components. These results show that SMM is able to retain the important parts of the image and remove the interference.

\textbf{Feature Space Visualization Results.} Figure \ref{fig:tsne} shows the tSNE \cite{van2008visualizing} visualization results of the output layer feature before the label mapping layer. Before applying VR methods, the target domain's output feature space shows limited class separation. With the baseline methods, we observe enhanced but incomplete separations, where certain class pairs (such as `3, 5' and `6, 8' in SVHN, `River’ and `highway or road' in EuroSAT) remain indistinguishable in the feature space.
By applying $f_{\rm mask}$, our method successfully resolves incorrectly clustered classes, underscoring the effectiveness of SMM.

\textbf{Comparison with Finetuning-based Methods.}
In Appendix~\ref{app:compare}, we compare our SMM with two prevalent finetuning approaches: finetuning fully connected layers and low-rank adaptation \cite{zhu2023melo}. This comparison highlights two key benefits of input VR: (1) its efficacy in target tasks with lower-resolution images and (2) its orthogonal relationship to, yet compatibility with, finetuning methods. Additionally, Appendix~\ref{app:compare} provides a comprehensive discussion on the strengths and weaknesses of Input VR in comparison to finetuning techniques.

\textbf{More Experiments}. The training curves are plotted and analyzed in Appendix \ref{trainingcurves}. The effectiveness of SMM when learning with different $f_{\rm out}$ is discussed in Appendix~\ref{Asec:diff_f}. 

\section{Conclusion}
In this paper, we identified significant shortcomings in the use of a shared mask across all samples in previous VR practices, notably its failure to accommodate sample diversity, leading to increased training loss of particular samples.
In response, we proposed a new SMM learning framework, integrating a lightweight neural net-based mask generator to generate three-channel masks per sample, and a patch-wise interpolation module that resizes and aligns masks to model input. Both theoretical justification and experimental results validated the effectiveness of our proposed method.

\section*{Acknowledgements}

CYC and FL are supported by the Australian Research Council (ARC) with grant number DE240101089, and FL is also supported by the ARC with grant number DP230101540 and the NSF\&CSIRO Responsible AI program with grant number 2303037. JZQ is supported by ARC with grant number DP240101006. This research is also supported by The University of Melbourne’s Research Computing Services and the Petascale Campus Initiative. We sincerely appreciate the time and dedication of the reviewers in carefully reviewing our manuscript.

\section*{Impact Statement}
This paper presents work whose goal is to advance the field of 
Machine Learning. There are many potential societal consequences 
of our work, none of which we feel must be specifically highlighted here.

\nocite{langley00}

\bibliography{example_paper}
\bibliographystyle{icml2024}

\newpage
\appendix
\onecolumn
\section*{Appendix}
\section{Additional Explanation of Methods}
\subsection{General Procedure of Input Visual Reprogramming} \label{prob}
\begin{figure}[ht]
    \centering
    \includegraphics[width=0.7\linewidth]{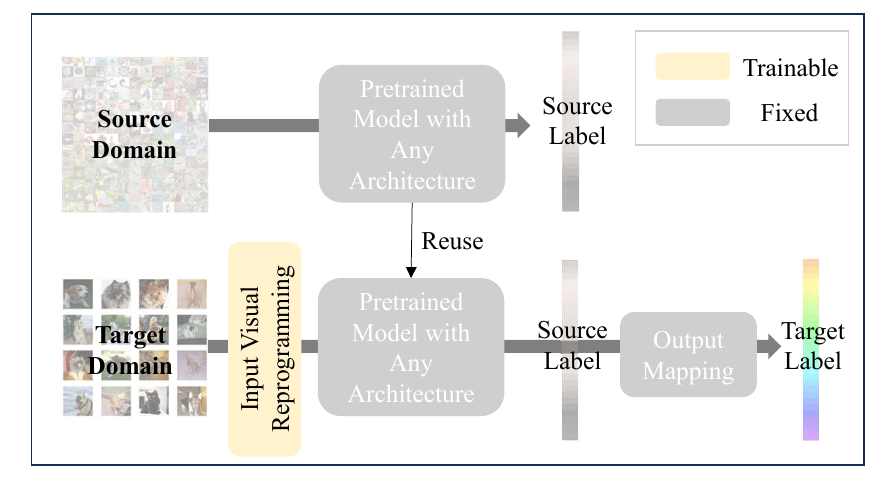}
    \caption{Problem setting of input visual reprogramming. The upper part shows the source task, while the lower part shows the target task. The main focus of visual reprogramming is the trainable part marked with a yellow rectangle in the input space.}
    \label{fig:problem setting}
\end{figure}
The task of VR is to reuse the fixed, well-trained model toward a target task. As shown in Figure \ref{fig:problem setting}, the VR module is added before the pre-trained model into the input space. To gap the difference between the source label and target label, an output mapping function without parameters is also used, taking a source label as the input and outputting a target label. Therefore, regardless of the architecture, a well-trained model on the source dataset can be transferred to the target task without editing. 

\subsection{Architecture of the Mask Generator and Parameter Statistics} \label{architecture}

\begin{figure}[ht]
    \centering
    \includegraphics[width=0.8\linewidth]{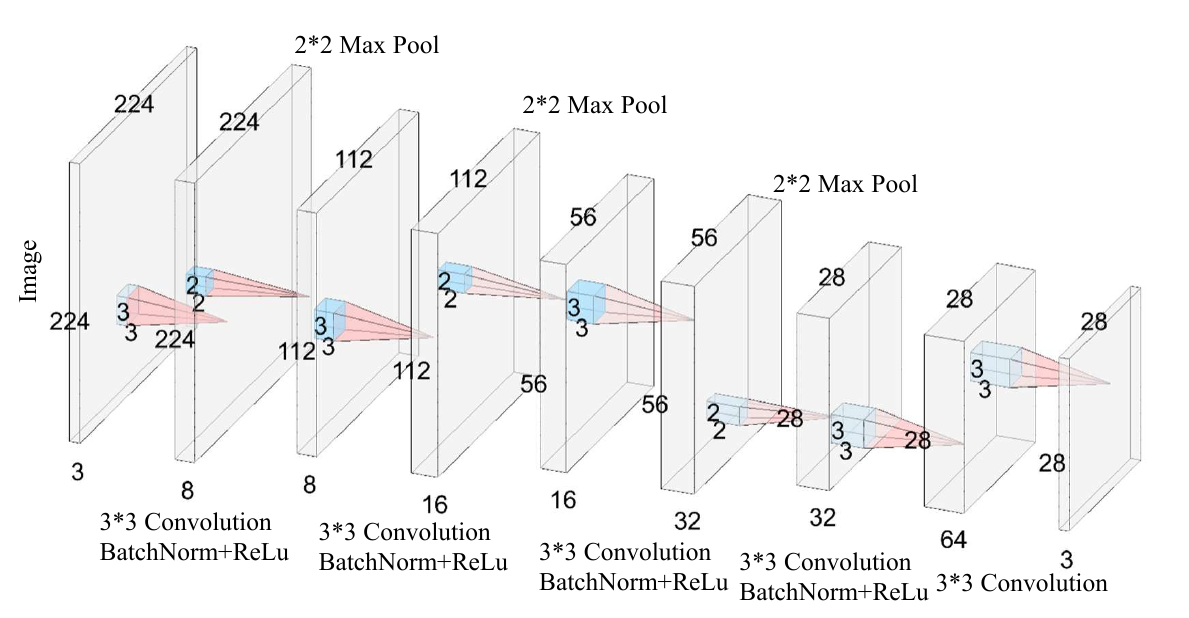}
    \caption{Architecture of the 5-layer mask generator designed for ResNet}
    \label{fig:5layerarchitecture}
\end{figure}

\begin{figure}[ht]
    \centering
    \includegraphics[width=0.8\linewidth]{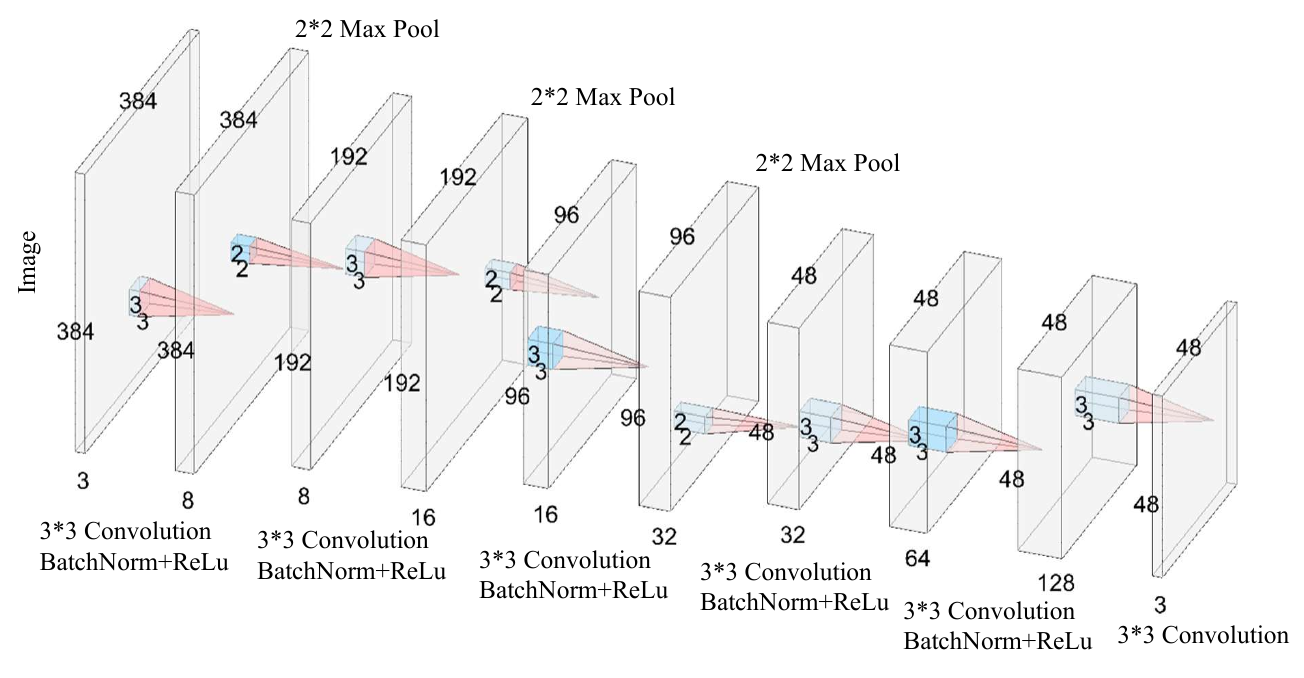}
    \caption{Architecture of the 6-layer mask generator designed for ViT}
    \label{fig:6layerarchitecture}
\end{figure}

\textbf{Architecture of the Mask Generator.}
For simplicity, we only include $3\times 3$ convolution layers and $2 \times 2$ Max-Pooling layers in the architecture. 
The number of channels of the last layer is set to 3 to produce a three-channel mask.

The detailed architecture of the 5-layer CNN and 6-layer CNN used in ResNet-18, ResNet-50, and ViT are shown in Figure \ref{fig:5layerarchitecture} and Figure \ref{fig:6layerarchitecture}. Each of them contains 5 or 6 CNN layers with $3 \times 3$ kernels of padding size 1 and stride 1. Both models have 3 Max-Pooling layers. 

\begin{figure}[ht]
    \centering
    \includegraphics[width=0.6\linewidth]{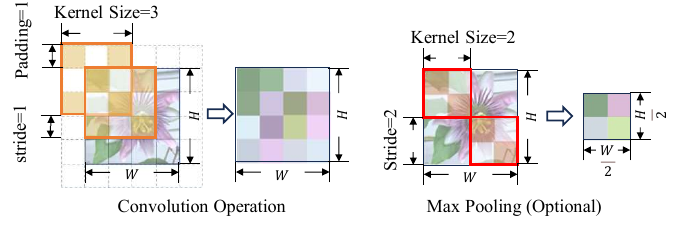}
    \caption{Changes of the image size when performing convolution and pooling operations with our stride, kernel and padding size}
    \label{fig:Convolution}
\end{figure}

\textbf{Discussion of Input and Output Size.}
To show the relationship between the sizes of the input images and the output masks, we use $s$, $p$, and $k$ to represent the stride, padding, and kernel sizes, respectively, while $H$ and $W$ denote the height and the width of a certain channel. The output dimensions of the output channel after convolution or pooling are $\left \lfloor \frac{H+2p-k}{s}  \right \rfloor +1$ and $\left \lfloor \frac{W+2p-k}{s}  \right \rfloor +1$. As shown in Figure \ref{fig:Convolution}, when $s=1, p=1, k=3$, the size of a single channel remains unchanged; when $s=2, p=0, k=2$, the size of a channel is reduced by half in each dimension.
In other words, by only using $3 \times 3$ convolution layers, $f_{\rm mask}(.|\phi)$ can retain the original size of a single channel. However, if we introduce Max-Pooling layers to remove redundant information, the output size will be shrunk and another patch-wise interpolation module should be included in $f_{\rm mask}(.|\phi)$ for resizing. Assuming that $l$ Max-Pooling layers are used, the output size of a single channel becomes $\left \lfloor  \frac{H}{2^l}\right \rfloor  \times \left \lfloor \frac{W}{2^l}\right \rfloor $.

\begin{table}[ht]
\caption{Statistics of Mask Generator Parameter Size}
\label{param_statstic}

\begin{center}
\begin{small}
\begin{sc}
\begin{tabular}{cccccc}
\toprule
Pre-trained & \makecell{Input \\ Image \\ Size} & \makecell{$f_{\rm mask}$ \\ CNN \\ Layers} & \makecell{Extra Parameters \\ of our $f_{\rm mask}$} & \makecell{Our Extra \\ Parameters $\div$ \\ Reprogramming \\ Parameters} & \makecell{Our Extra \\ Parameters $\div$ \\ Pre-trained \\ Model Parameters} \\
\midrule
ResNet-18  & 224$\times$224$\times$2  & 5          & 26,499                           & 17.60\%                                           & 0.23\%                                               \\
ResNet-50  & 224$\times$224$\times$3  & 5          & 26,499                           & 17.60\%                                           & 0.10\%                                               \\
ViT-B32    & 384$\times$384$\times$3  & 6          & 102,339                          & 23.13\%                                           & 0.12\%    \\
\bottomrule
\end{tabular}
\end{sc}
\end{small}
\end{center}
\end{table}
\textbf{Parameter Statistics.}
The parameter statistics of the mask generator, $f_{\rm mask}$, are summarized in Table \ref{param_statstic}. 
This includes a detailed breakdown of $f_{\rm mask}$ across different pre-trained backbone models, a relative size comparison with the watermarking reprogramming method, and the number of trainable parameters added to frozen pre-trained models by $f_{\rm mask}$.
From the size point of view, our mask generator is indeed lightweight and efficient: 
the CNN architectures contribute only 17.6\% and 23.13\% of the additional trainable parameters required by watermarking reprogramming. 
Moreover, relative to the total parameters in pre-trained models, the additional contribution of mask generators is trivial, ranging from 0.1\% to 0.23\% of parameters, which highlights its minimal footprint. 

\subsection{Advantage of Patch-wise Interpolation}
\label{app:interpolation}
\begin{table}[ht]
\caption{Comparison of Patch-wise Interpolation and Other Interpolation Methods}

\begin{center}
\begin{small}
\begin{sc}
\begin{tabular}{c|l|lll}
\toprule
\multicolumn{1}{l}{}            &                                & \makecell{Bilinear \\ Interpolation} & \makecell{Bicubic \\ Interpolation} & Ours                 \\
\midrule
\multirow{3}{*}{ResNet - 18/50} & \makecell{Number of \\ Pixel \\ Accesses (1e6)}    & 0.602                  & 2.408                 & \textbf{0.151}       \\
                                & \makecell{Time Per \\ Batch (s)}             & 0.062±0.001            & 0.195±0.013           & \textbf{0.026±0.004} \\
                                & \makecell{Require \\ Backpropagation}       & Yes                    & Yes                   & \textbf{No}          \\
\midrule
\multirow{3}{*}{ViT - B32}      & \makecell{Number of \\ Pixel \\ Accesses (1e6)} & 1.769                  & 7.078                 & \textbf{0.442}       \\
                                & \makecell{Time Per \\ Batch (s)}             & 0.165±0.009            & 0.486±0.026           & \textbf{0.069±0.004} \\
                                & \makecell{Require \\ Backpropagation}       & Yes                    & Yes                   & \textbf{No} \\
\bottomrule
\end{tabular}
\end{sc}
\end{small}
\end{center}
\label{tab:interpolation}
\end{table}

To assess the efficiency of patch-wise interpolation, we compare it with bilinear and bicubic methods, employing the following numerical metrics for evaluation:
(1) Number of Pixel Accesses: The count of times pixel values are retrieved per image during an interpolation algorithm. The fewer, the better.
(2) Time Per Batch: The time cost for processing a batch of 256-sized images. The fewer, the better.

As shown in Table \ref{tab:interpolation}, the patch-wise interpolation module excels across all metrics. This module exclusively involves copying operations, thus avoiding floating-point calculations and avoiding backpropagation gradient computations during training. Consequently, it is more efficient.

\subsection{Detailed Explanation of Ouptput Mapping Methods $f^{\rm Flm}_{\rm out}$ and $f^{\rm Ilm}_{\rm out}$}\label{app:out}
The inverse function of $f_{\rm out}$ regarding Flm is an injective function: 
\begin{align}
y^{\rm P}_{\rm Flm} = \mathop{\arg \max}\limits_{y\in\mathcal{Y}^{\rm P}}\mathop{Pr}\limits_{(x_i,y_i)\sim \mathcal{D_{\rm  T}}}
\{y = f_{\rm P}(f_{\rm in}(x_i|\theta))| y_i=y^{\rm T}\},
\end{align}
where $y^{\rm P}_{\rm Flm}$ is the optimal $y^{\rm P}$ given the target label $y^{\rm T}$, $f_{\rm P}(f_{\rm in}(x_i|\theta))$ is the predicted label given the input image $x_i$. For all images with the label $y^{\rm T}$, the predicted $y^{\rm P}$ with the highest probability will be $y^{\rm P}_{\rm Flm}$ for a given $y^{\rm T}$.
Flm remains unchanged throughout iterations. For a specific $y^{\rm T}$, Flm determines the correspondence between $y^{\rm T}$ and the most frequently assigned class $y^{\rm P}$ in $\mathcal{Y}^{\rm P}$, utilizing the well-trained network for all target training samples of the class $y^{\rm T}$, thus obtaining $f_{\rm out}^{\rm Flm}$, shown in Algorithm~\ref{alg:flm}.

As the label mapping may change from time to time when learning $f_{\rm in}$, \citet{chen2023understanding} proposed an \emph{iterative label mapping} (Ilm) method that updates $f_{\rm out}(\cdot)$ after each training iteration. Let $y^{\rm P,(j)}_{\rm Ilm}$ be the optimal $y^{\rm P}$ in the $j$th training epoch. We have:
\begin{align}
y^{\rm P,(j+1)}_{\rm Ilm} & 
= \mathop{\arg \max}\limits_{y\in\mathcal{Y}^{\rm P}}\mathop{Pr}\limits_{(x_i,y_i)\sim \mathcal{D_{\rm  T}}}\{y = f_{\rm P}(f_{\rm in}^{(j)}(x_i|\theta^{(j)}))|y_i=y^{\rm T}\},
\end{align}
where $f_{\rm in}^{(j)}(\cdot|\theta^{(j)})$ is the parameters of the $j$th epoch. The output mapping function is updated after each iteration until convergence.

\begin{algorithm}[h]
\caption{Computing Frequency Distribution of [$f_{\rm P}(f_{\rm in}(x_i|\theta))$, $y^{\rm T}$]}
\label{alg:dm}
\begin{algorithmic}[1] 
\STATE {\bfseries Input:} Target training set $\{(x^{\rm T}_i, y^{\rm T}_i)\}_{i=1}^{n}$, given input VR $f_{\rm in}(\cdot|\theta)$ and pre-trained model $f_{\rm P}(\cdot)$
\STATE {\bfseries Output:} Frequency distribution matrix $d \in \mathbb{Z}^{|\mathcal{Y}^{\rm P}| \times |\mathcal{Y}^{\rm T}|}$
\STATE Initialize $d \leftarrow \{0\}^{|\mathcal{Y}^{\rm P}| \times |\mathcal{Y}^{\rm T}|}$
\STATE \text{\# Compute frequency distribution $d$}
\FOR{$i=1...n$}
\STATE $\hat{y}_i^{\rm P} \leftarrow f_{\rm P}(f_{\rm in}(x^{\rm T}_i|\theta))$
\STATE $d_{\hat{y}_i^{\rm P},{y}_i^{\rm T}}\leftarrow d_{\hat{y}_i^{\rm P},{y}_i^{\rm T}}+1$
\ENDFOR
\end{algorithmic}
\end{algorithm}

\begin{algorithm}[h]
\caption{Frequent Label Mapping ($f^{\rm Flm}_{\rm out}$)}
\label{alg:flm}
\begin{algorithmic}[1] 
\STATE {\bfseries Input:} Label space of the pre-trained task $\mathcal{Y}^{\rm P}$, label space of the target task $\mathcal{Y}^{\rm T}$, target training set $\{(x^{\rm T}_i, y^{\rm T}_i)\}_{i=1}^{n}$, given pre-trained model $f_{\rm P}(\cdot)$
\STATE {\bfseries Output:} Flm $f^{\rm Flm}_{\rm out}:\mathcal{Y}^{\rm P}_{\rm sub} \rightarrow \mathcal{Y}^{\rm T}$
\STATE Initialize $f^{\rm Flm}_{\rm out}(\cdot) \leftarrow 0$, subset $\mathcal{Y}^{\rm P}_{\rm sub} \leftarrow \emptyset$ to store matched labels, initialize $f_{\rm in}(\cdot|\theta)$ to be an identity function ($\theta \leftarrow \mathbf{0}$)
\STATE \text{\# Compute frequency distribution $d$}
\STATE Use Algorithm \ref{alg:dm} to obtain $d$
\STATE \text{\# Compute output mapping $f^{\rm Flm}_{\rm out}$}
\WHILE{size of $\mathcal{Y}^{\rm P}_{\rm sub}$ is not $|\mathcal{Y}^{\rm T}|$}
\STATE Find the maximum $d_{y^{\rm P}, y^{\rm T}}$ in $d$
\STATE $\mathcal{Y}^{\rm P}_{\rm sub} \leftarrow \mathcal{Y}^{\rm P}_{\rm sub} \cup \{y^{\rm P}\}$
\STATE $f^{\rm Flm}_{\rm out}(y^{\rm P}) \leftarrow y^{\rm T}$ \text{ \# Update the label mapping function}
\STATE $d_{y^{\rm P},t}\leftarrow 0$ for $t=1,2,...,|\mathcal{Y}^{\rm T}|$ \text{ \# Avoiding illegal assignment to the injective function}
\STATE $d_{s,y^{\rm T}}\leftarrow 0$ for $s=1,2,...,|\mathcal{Y}^{\rm P}|$

\ENDWHILE
\end{algorithmic}
\end{algorithm}

\begin{algorithm}[h]
\caption{Iterative Label Mapping ($f^{\rm Ilm}_{\rm out}$)}
\label{alg:ilm}
\begin{algorithmic}[1] 
\STATE {\bfseries Input:} Label space of the pre-trained task $\mathcal{Y}^{\rm P}$, label space of the target task $\mathcal{Y}^{\rm T}$, target training set $\{(x^{\rm T}_i, y^{\rm T}_i)\}_{i=1}^{n}$, given pre-trained model $f_{\rm P}(\cdot)$, total iteration number $E$, learning rate $\alpha$
\STATE {\bfseries Output:} Ilm $f^{\rm Ilm, (j)}_{\rm out}:\mathcal{Y}^{\rm P}_{\rm sub} \rightarrow \mathcal{Y}^{\rm T}$ for iteration $j$
\STATE Initialize $f^{\rm Ilm, (j)}_{\rm out}(\cdot) \leftarrow 0$, subset $\mathcal{Y}^{\rm P}_{\rm sub} \leftarrow \emptyset$ to store matched labels, initialize $f_{\rm in}(\cdot|\theta)$ to be an identity function ($\theta \leftarrow \mathbf{0}$)
\FOR{$j=1...E$}
\STATE \text{\# Compute frequency distribution $d$}
\STATE Use Algorithm \ref{alg:dm} to obtain $d$
\STATE \text{\# Compute output mapping $f^{\rm Ilm, (j)}_{\rm out}$}
\WHILE{size of $\mathcal{Y}^{\rm P}_{\rm sub}$ is not $|\mathcal{Y}^{\rm T}|$}
\STATE Find the maximum $d_{y^{\rm P}, y^{\rm T}}$ in $d$
\STATE $\mathcal{Y}^{\rm P}_{\rm sub} \leftarrow \mathcal{Y}^{\rm P}_{\rm sub} \cup \{y^{\rm P}\}$
\STATE $f^{\rm Ilm, (j)}_{\rm out}(y^{\rm P}) \leftarrow y^{\rm T}$  \text{ \# Update the label mapping function for iteration $j$}
\STATE $d_{y^{\rm P},t}\leftarrow 0$ for $t=1,2,...,|\mathcal{Y}^{\rm T}|$  \text{ \# Avoiding illegal assignment to the injective function}
\STATE $d_{s,y^{\rm T}}\leftarrow 0$ for $s=1,2,...,|\mathcal{Y}^{\rm P}|$
\ENDWHILE
\STATE \text{\# Train $f_{\rm in}(\cdot|\theta)$ for iteration $j$}
\STATE $\theta \leftarrow \theta - \alpha\cdot \nabla_{\theta}\frac{1}{n}\sum_{i=1}^n\ell(f^{\rm Ilm, (j)}_{\rm out}(f_{\rm P}(f_{\rm in}(x^{\rm T}_i|\theta))), y^{\rm T}_i)$
\ENDFOR
\end{algorithmic}
\end{algorithm}

Ilm evolves with iterations, being an improved version of Flm. As is shown in Algorithm~\ref{alg:ilm}, before training the reprogramming pattern $\theta$ in each epoch, Ilm updates the one-to-one mapping from $\mathcal{Y}^{\rm P}$ to $\mathcal{Y}^{\rm T}$ with the training samples incorporating the current pattern, iteratively until convergence.

\section{Additional Theoretical Proof} \label{proof}
\subsection{Proof of Theorem \ref{theorem1}}
\label{app:approx}
The approximation error of $\mathcal{F}_1$ and $\mathcal{F}_2$ can be formulated as:
\begin{align}
    \nonumber
    & {\rm Err}^{\rm apx}_{\mathcal{D}}(\mathcal{F}_1) = \inf_{f\in\mathcal{F}_1}\mathbb{E}_{(X,Y)\sim\mathcal{D}}\ell(f(X),Y) - R^*_{\mathcal{D}}, \\
    \nonumber
    & {\rm Err}^{\rm apx}_{\mathcal{D}}(\mathcal{F}_2) = \inf_{f\in\mathcal{F}_2}\mathbb{E}_{(X,Y)\sim\mathcal{D}}\ell(f(X),Y) - R^*_{\mathcal{D}},
\end{align}
Straightforwardly, 
\begin{align}
     \nonumber
     & \mathcal{F}_1 \supseteq \mathcal{F}_2 \Leftrightarrow \forall f \in \mathcal{F}_2, f \in \mathcal{F}_1 
\end{align}
Given $\mathcal{F}_1\subseteq\mathcal{F}_2$, we have:
\begin{align}
    \nonumber
    & \forall f \in \mathcal{F}_1, f \in \mathcal{F}_2, \\
    \nonumber
    \Rightarrow & \inf_{f\in\mathcal{F}_1}\mathbb{E}_{(X,Y)\sim\mathcal{D}}\ell(f(X),Y) \ge \inf_{f\in\mathcal{F}_2}\mathbb{E}_{(X,Y)\sim\mathcal{D}}\ell(f(X),Y) \\
    \nonumber
     \Rightarrow & {\rm Err}^{\rm apx}_{\mathcal{D}}(\mathcal{F}_1) \ge {\rm Err}^{\rm apx}_{\mathcal{D}}(\mathcal{F}_2)
\end{align}

\subsection{Proof of Proposition \ref{proposition1}}
\label{app:smm_shr}
We prove Proposition~\ref{proposition1} as follows.
\begin{proof}
\renewcommand{\qedsymbol}{}
    With specially designed kernel and padding sizes, the output of CNN can be reshaped to match the size of the input image. Assuming $d_{\rm P} = H \times W \times C$, we define $M'\in \{0 ,1\}^{H*W*C\times 1}$ and $f'_{\rm mask}(\cdot)\in \mathbb{R}^{H*W*C\times 1}$ as transposed flattened $M$ and $f_{\rm mask}(\cdot)$, respectively. As the last layer of $f'_{\rm mask}(\cdot)$ is CNN, if the input of CNN is the resized image $r(x)$, with $x \in \mathcal{X}^{\rm T}$ (and $r(x) \in \mathbb{R}^{d_{\rm P}}$), we have $f'_{\rm mask}(r(x)) = W_{\rm last}f''_{\rm mask}(r(x))+b_{\rm last}$, with $b_{\rm last}$ being the bias of the last layer, and $W_{\rm last}$ being the mapping from the flattened input of the last CNN layer (i.e., $f''_{\rm mask}(r(x))$) to the flattened output without adding the bias, which can be derived using the parameters of the last CNN layer. With the set of any possible $W_{\rm last}$ being represented by $\{W_{\rm last}\}$, and all-zero matrix being $O$, we have:
\begin{align}
    \nonumber
    & b_{\rm last}\in \mathbb{R}^{H*W*C\times 1}, M'\in \{0 ,1\}^{H*W*C\times 1} \nonumber \\
    & \Rightarrow \forall M', M' \in \{b_{\rm last}| b_{\rm last}\in \mathbb{R}^{H*W*C\times 1}\} \label{eq_part1}\\
    \nonumber
    & O \in \{W_{\rm last}\} \text{\scriptsize     (When all weights in the last CNN layer is 0, $W_{\rm last}$ is a zero matrix)} \\ 
    & \Rightarrow f(x)= O^{H*W*C\times 1} \in \{f|f(x)=W_{\rm last}f''_{\rm mask}(r(x)), \forall x \in \mathcal{X}^{\rm T}\} \label{eq_part2} \\
    \nonumber
    & \Rightarrow \{f|f(x)=M', \forall x \in \mathcal{X}^{\rm T}\} \subseteq \{f|f(x)=f'_{\rm mask}(r(x)), \forall x \in \mathcal{X}^{\rm T}\} \text{\scriptsize     (Given Eq.~\eqref{eq_part1} and Eq.~\eqref{eq_part2})}\\
    \nonumber
    & \Rightarrow \{f|f(x)=M, \forall x \in \mathcal{X}^{\rm T}\} \subseteq \{f|f(x)=f_{\rm mask}(r(x)), \forall x \in \mathcal{X}^{\rm T}\} \\
    \nonumber
    & \Rightarrow \{f|f(x)=M \odot \delta, \forall x \in \mathcal{X}^{\rm T}\} \subseteq \{f|f(x)=f_{\rm mask}(r(x))  \odot \delta, \forall x \in \mathcal{X}^{\rm T}\} \\
    \nonumber
     & \Rightarrow \mathcal{F}^{\rm shr}(f'_{\rm P})\subseteq \mathcal{F}^{\rm smm}(f'_{\rm P})  \text{\scriptsize  (since $f'_{\rm P}$ is fixed)} \\
     \nonumber
     & \Rightarrow {\rm Err}^{\rm apx}_{\mathcal{D}_{\rm T}}(\mathcal{F}^{\rm smm}(f'_{\rm P})) \le {\rm Err}^{\rm apx}_{\mathcal{D}_{\rm T}}(\mathcal{F}^{\rm shr}(f'_{\rm P}))
\end{align}
\end{proof}

\subsection{SMM and Sample-specific Patterns}
\label{app:smm_sp}
We will then prove
\begin{proposition}
    for any fixed $f'_{\rm P}$, it holds that $\mathcal{F}^{\rm sp}(f'_{\rm P}) \subseteq \mathcal{F}^{\rm smm}(f'_{\rm P})$, and consequently, ${\rm Err}^{\rm apx}_{\mathcal{D}_{\rm T}}(\mathcal{F}^{\rm smm}(f'_{\rm P})) \le {\rm Err}^{\rm apx}_{\mathcal{D}_{\rm T}}(\mathcal{F}^{\rm sp}(f'_{\rm P}))$.    
\end{proposition}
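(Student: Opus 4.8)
The plan is to deduce the statement from Theorem~\ref{theorem1} by first establishing the subset relation $\mathcal{F}^{\rm sp}(f'_{\rm P}) \subseteq \mathcal{F}^{\rm smm}(f'_{\rm P})$, mirroring the structure of the proof of Proposition~\ref{proposition1}. The key observation is that the scaling pattern $\delta$ in the SMM hypothesis is free to range over all of $\mathbb{R}^{d_{\rm P}}$, so in particular it may be set to the all-ones vector $\mathbf{1}$, which makes the Hadamard product collapse: $f_{\rm mask}(r(x)) \odot \mathbf{1} = f_{\rm mask}(r(x))$. Thus every sample-specific-pattern hypothesis is recovered as a special case of an SMM hypothesis.

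Concretely, I would fix an arbitrary $f \in \mathcal{F}^{\rm sp}(f'_{\rm P})$, which by definition is generated by some mask network $f_{\rm mask}$, so that $f(x) = f'_{\rm P}(r(x) + f_{\rm mask}(r(x)))$ for all $x \in \mathcal{X}$. I would then exhibit a member of $\mathcal{F}^{\rm smm}(f'_{\rm P})$ that equals $f$ by retaining the \emph{same} mask network $f_{\rm mask}$ and choosing $\delta = \mathbf{1} \in \mathbb{R}^{d_{\rm P}}$. Since $v \odot \mathbf{1} = v$ for every $v \in \mathbb{R}^{d_{\rm P}}$, the two input transformations agree pointwise, and composing with the fixed $f'_{\rm P}$ yields the same function on all $x$; hence $f \in \mathcal{F}^{\rm smm}(f'_{\rm P})$. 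As $f$ was arbitrary, the inclusion $\mathcal{F}^{\rm sp}(f'_{\rm P}) \subseteq \mathcal{F}^{\rm smm}(f'_{\rm P})$ follows. Applying Theorem~\ref{theorem1} with $\mathcal{F}_1 = \mathcal{F}^{\rm sp}(f'_{\rm P})$, $\mathcal{F}_2 = \mathcal{F}^{\rm smm}(f'_{\rm P})$, and distribution $\mathcal{D}_{\rm T}$ then gives ${\rm Err}^{\rm apx}_{\mathcal{D}_{\rm T}}(\mathcal{F}^{\rm smm}(f'_{\rm P})) \le {\rm Err}^{\rm apx}_{\mathcal{D}_{\rm T}}(\mathcal{F}^{\rm sp}(f'_{\rm P}))$, completing the argument.

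I expect no substantial obstacle here. Unlike Proposition~\ref{proposition1}, where establishing the inclusion required arguing that the final convolutional layer's bias and weights can realize an arbitrary binary mask $M$ (via the all-zero-weight construction that puts $M$ in the range of the bias term), this inclusion needs only the trivial fact that $\mathbf{1}$ lies in the admissible range of $\delta$. The sole points meriting a line of care are confirming that the all-ones vector is a legitimate choice of $\delta$ — which holds because $\delta$ ranges over the full space $\mathbb{R}^{d_{\rm P}}$ in the SMM hypothesis — and noting that the two hypothesis spaces share the same family of mask networks $f_{\rm mask}$, so that no re-parameterization of the mask generator is needed when passing from $\mathcal{F}^{\rm sp}$ to $\mathcal{F}^{\rm smm}$.
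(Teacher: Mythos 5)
Your argument is correct and coincides with the paper's own proof in Appendix~\ref{app:smm_sp}: the paper likewise observes that the all-ones matrix $J^{d_{\rm P}}$ is an admissible choice of $\delta$, so that $f_{\rm mask}(r(x))\odot J^{d_{\rm P}}=f_{\rm mask}(r(x))$ yields $\mathcal{F}^{\rm sp}(f'_{\rm P})\subseteq\mathcal{F}^{\rm smm}(f'_{\rm P})$, and then invokes Theorem~\ref{theorem1}. Your additional remarks --- that the same $f_{\rm mask}$ is retained and that this inclusion is far simpler than the one in Proposition~\ref{proposition1} --- are accurate.
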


\begin{proof}
\renewcommand{\qedsymbol}{}
    Let $\Delta$ be the set of possible $\delta$, with all-one matrix being denoted as $J$, we have:
\begin{align}
     \nonumber
     & \Rightarrow J^{d_{\rm P}} \in \Delta \\
     \nonumber
     & \Rightarrow \{f|f(x)=f_{\rm mask}(r(x)) \odot J^{d_{\rm P}}, \forall x \in \mathcal{X}^{\rm T}\} \subseteq \{f|f(x)=f_{\rm mask}(r(x)) \odot \delta, \forall x \in \mathcal{X}^{\rm T}\} \\
     \nonumber
     & \Rightarrow \{f|f(x)=f_{\rm mask}(r(x)), \forall x \in \mathcal{X}^{\rm T}\} \subseteq \{f|f(x)=f_{\rm mask}(r(x)) \odot \delta, \forall x \in \mathcal{X}^{\rm T}\} \\
     \nonumber
     & \Rightarrow \mathcal{F}^{\rm sp}(f'_{\rm P}) \subseteq \mathcal{F}^{\rm smm}(f'_{\rm P})  \text{\scriptsize  (Since $f'_{\rm P}$ is fixed)} \\
     \nonumber
     & \Rightarrow {\rm Err}^{\rm apx}_{\mathcal{D}_{\rm T}}(\mathcal{F}^{\rm smm}(f'_{\rm P})) \le {\rm Err}^{\rm apx}_{\mathcal{D}_{\rm T}}(\mathcal{F}^{\rm sp}(f'_{\rm P}))
\end{align}
\end{proof}

\section{Additional Experimental Setup} \label{trainparam}

\begin{table}[ht]
\caption{Detailed Dataset Information}
\label{datasets}

\begin{center}
\begin{small}
\begin{sc}
\begin{tabular}{c|cccc}
\toprule
\multicolumn{1}{c|}{Dataset} & Original Image Size & Training Set Size & Testing Set Size & Number of Classes \\
\midrule
CIFAR10                     & 32 $\times$ 32      & 50000      & 10000     & 10           \\
CIFAR100                    & 32 $\times$ 32      & 50000      & 10000     & 100          \\
SVHN                        & 32 $\times$ 32      & 73257      & 26032     & 10           \\
GTSRB                       & 32 $\times$ 32      & 39209      & 12630     & 43           \\
Flowers102                  & 128 $\times$ 128    & 4093       & 2463      & 102          \\
DTD                         & 128 $\times$ 128    & 2820       & 1692      & 47           \\
UCF101                      & 128 $\times$ 128    & 7639       & 3783      & 101          \\
Food101                     & 128 $\times$ 128    & 50500      & 30300     & 101          \\
SUN397                      & 128 $\times$ 128    & 15888      & 19850     & 397          \\
EuroSAT                     & 128 $\times$ 128    & 13500      & 8100      & 10           \\
OxfordPets                  & 128 $\times$ 128    & 2944       & 3669      & 37      \\
\bottomrule
\end{tabular}
\end{sc}
\end{small}
\end{center}
\end{table}

The 11 datasets used for the experiments are summarized in Table \ref{datasets}, while the corresponding training parameters are listed in Table \ref{training}. When learning the ResNet tasks, we follow the same learning strategies as \citet{chen2023understanding}. When learning ViT-B32, we choose the initial learning rate $\alpha$ and the learning rate decay $\gamma$ with a training parameter searching experiment, with results presented in Table \ref{hyperparam}.

\begin{table}[ht]
\caption{Tuning Initial Learning Rate and Learning Rate Decay Using CIFAR10 and ViT-B32 (Accucracy \%)}
\label{hyperparam}

\begin{center}
\begin{small}
\begin{sc}
\begin{tabular}{c|cccc}
\toprule
$\gamma | \alpha$ & 0.1    & 0.01   & 0.001           & 0.0001 \\
\midrule
1   & 0.9542 & 0.9577 & \textbf{0.9745} & 0.9734 \\
0.1 & 0.9516 & 0.9572 & 0.9738          & 0.9727 \\
\bottomrule
\end{tabular}
\end{sc}
\end{small}
\end{center}
\end{table}

Sharing the same $\alpha$ and $\gamma$ may not be optimal for all datasets. As shown in Table~\ref{EuroHyper}, on UCF101, using $\alpha=0.001$ and $\gamma = 1$ derived from Table 7 leads to sub-optimal model performance. Nevertheless, for uniformity and fairness in this paper, we still use a single set of unified training parameters for all datasets.

\begin{table}[ht]
\caption{Results on UCF101 with Different Training Parameters (using ViT-B32)}
\label{EuroHyper}

\begin{center}
\begin{small}
\begin{sc}
\begin{tabular}{cccc}
\toprule
                             & $\alpha$ & $\gamma$ & SMM Accuracy (\%) \\
\midrule
Unified Learning Parameters  & 0.001    & 1        & 42.6              \\
Specific Learning Parameters & 0.01     & 0.1      & 49.9  \\
\bottomrule
\end{tabular}
\end{sc}
\end{small}
\end{center}
\end{table}

\begin{table}[ht]
\caption{Detailed Model Training Parameter Settings of Our Mask Generator (where $b$, $\alpha$ and $\gamma$ denote batch size, initial learning rate and learning rate decay, respectively)}
\label{training}

\begin{center}
\begin{small}
\begin{sc}
\begin{tabular}{c|cccccc}
\toprule
           &  &  & \multicolumn{2}{c} {5-layer} &  \multicolumn{2}{c} {6-layer} \\
           & $b$ & Milestones        & $\alpha$ & $\gamma$ & $\alpha$ & $\gamma$ \\
\midrule
CIFAR10    & 256 & {[}0, 100, 145{]} & 0.01             & 0.1             & 0.001            & 1               \\
CIFAR100   & 256 & {[}0, 100, 145{]} & 0.01             & 0.1             & 0.001            & 1               \\
SVHN       & 256 & {[}0, 100, 145{]} & 0.01             & 0.1             & 0.001            & 1               \\
GTSRB      & 256 & {[}0, 100, 145{]} & 0.01             & 0.1             & 0.001            & 1               \\
Flowers102 & 256 & {[}0, 100, 145{]} & 0.01             & 0.1             & 0.001            & 1               \\
DTD        & 64  & {[}0, 100, 145{]} & 0.01             & 0.1             & 0.001            & 1               \\
UCF101     & 256 & {[}0, 100, 145{]} & 0.01             & 0.1             & 0.001              & 1             \\
Food101    & 256 & {[}0, 100, 145{]} & 0.01             & 0.1             & 0.001             & 1             \\
SUN397     & 256 & {[}0, 100, 145{]} & 0.01             & 0.1             & 0.001            & 1               \\
EuroSAT    & 256 & {[}0, 100, 145{]} & 0.01             & 0.1             & 0.001            & 1               \\
OxfordPets & 64  & {[}0, 100, 145{]} & 0.01             & 0.1             & 0.001             & 1    \\
\bottomrule
\end{tabular}
\end{sc}
\end{small}
\end{center}
\end{table}

\section{Additional Experimental Results}
\subsection{Applying SMM with Different $f_{\rm out}$}\label{Asec:diff_f}
\begin{table*}[t] 
\caption{Performance Improvement When Applying Our Input Reprogramming on Different Label Mapping Methods (the average results are highlighted in grey)}
\label{improve}

\begin{center}
\begin{small}
    
\begin{sc}

\resizebox{0.95\textwidth}{!}{
\begin{tabular}{c|ccc|ccc|ccc}
\toprule
\multicolumn{1}{c|}{$f_{\rm out}$} & \multicolumn{3}{c|}{Iterative Label   Mapping}   & \multicolumn{3}{c|}{Frequent Label Mapping}      & \multicolumn{3}{c}{Random Label Mapping}        \\
\midrule
                                  & w/o Ours & \multicolumn{1}{l}{w Ours} & Improve  & w/o Ours & \multicolumn{1}{l}{w Ours} & Improve  & w/o Ours & \multicolumn{1}{l}{w Ours} & Improve  \\
\midrule
CIFAR10                           & 68.90\%  & 72.80\%                   & +3.90\%  & 71.79\%  & 72.75\%                   & +0.96\%  & 65.68\%  & 69.71\%                   & +4.03\%  \\
CIFAR100                          & 33.80\%  & 39.40\%                   & +5.60\%  & 29.79\%  & 32.35\%                   & +2.56\%  & 16.99\%  & 23.47\%                   & +6.48\%  \\
SVHN                              & 78.30\%  & 84.40\%                   & +6.10\%  & 78.78\%  & 83.73\%                   & +4.95\%  & 77.44\%  & 85.37\%                   & +7.92\%  \\
GTSRB                             & 76.80\%  & 80.40\%                   & +3.60\%  & 74.76\%  & 80.90\%                   & +6.14\%  & 69.60\%  & 82.38\%                   & +12.79\% \\
Flowers102                        & 23.20\%  & 38.70\%                   & +15.50\% & 17.78\%  & 32.16\%                   & +14.37\% & 12.34\%  & 37.68\%                   & +25.33\% \\
DTD                               & 29.00\%  & 33.60\%                   & +4.60\%  & 30.14\%  & 34.28\%                   & +4.14\%  & 14.60\%  & 19.74\%                   & +5.14\%  \\
UCF101                            & 24.40\%  & 28.70\%                   & +4.30\%  & 22.71\%  & 25.72\%                   & +3.01\%  & 9.04\%   & 16.71\%                   & +7.67\%  \\
Food101                           & 13.20\%  & 17.50\%                   & +4.30\%  & 11.58\%  & 15.21\%                   & +3.62\%  & 7.15\%   & 15.86\%                   & +8.71\%  \\
SUN397                            & 13.40\%  & 16.00\%                   & +2.60\%  & 13.45\%  & 15.45\%                   & +1.99\%  & 1.05\%   & 3.35\%                    & +2.29\%  \\
EuroSAT                           & 84.30\%  & 92.20\%                   & +7.90\%  & 86.00\%  & 92.67\%                   & +6.67\%  & 84.49\%  & 94.47\%                   & +9.98\%  \\
OxfordPets                        & 70.00\%  & 74.10\%                   & +4.10\%  & 69.66\%  & 72.83\%                   & +3.16\%  & 8.89\%   & 16.84\%                   & +7.96\% \\ \rowcolor{gray!30} Average & 46.85\%&52.53\%&+5.68\%&46.04\%&50.73\%&+4.69\%&33.39\%&42.32\%&+8.94\% \\
\bottomrule
\end{tabular}}
\end{sc}
\end{small}
\end{center}
\end{table*}

As mentioned before, and as shown in Appendix \ref{prob}, input VR is agnostic of the output label mapping method. Thus, our SMM can be applied to different output label methods other than Ilm. Experimental results are presented in Table \ref{improve}.

Our method improves the performance of all output mapping methods. In most cases, the worse the output mapping method is, the more pronounced the improvement of SMM will be. When there is sufficient training data (e.g., GTSRB, SVHN, CIFAR10 and Food101), adding SMM can compensate for the worse-performing label mapping methods. With SMM, these methods also produce competitive results.

\subsection{Analysis of Learning Curves} \label{trainingcurves}
\begin{figure}[!ht]
    \centering
    \includegraphics[width=\linewidth]{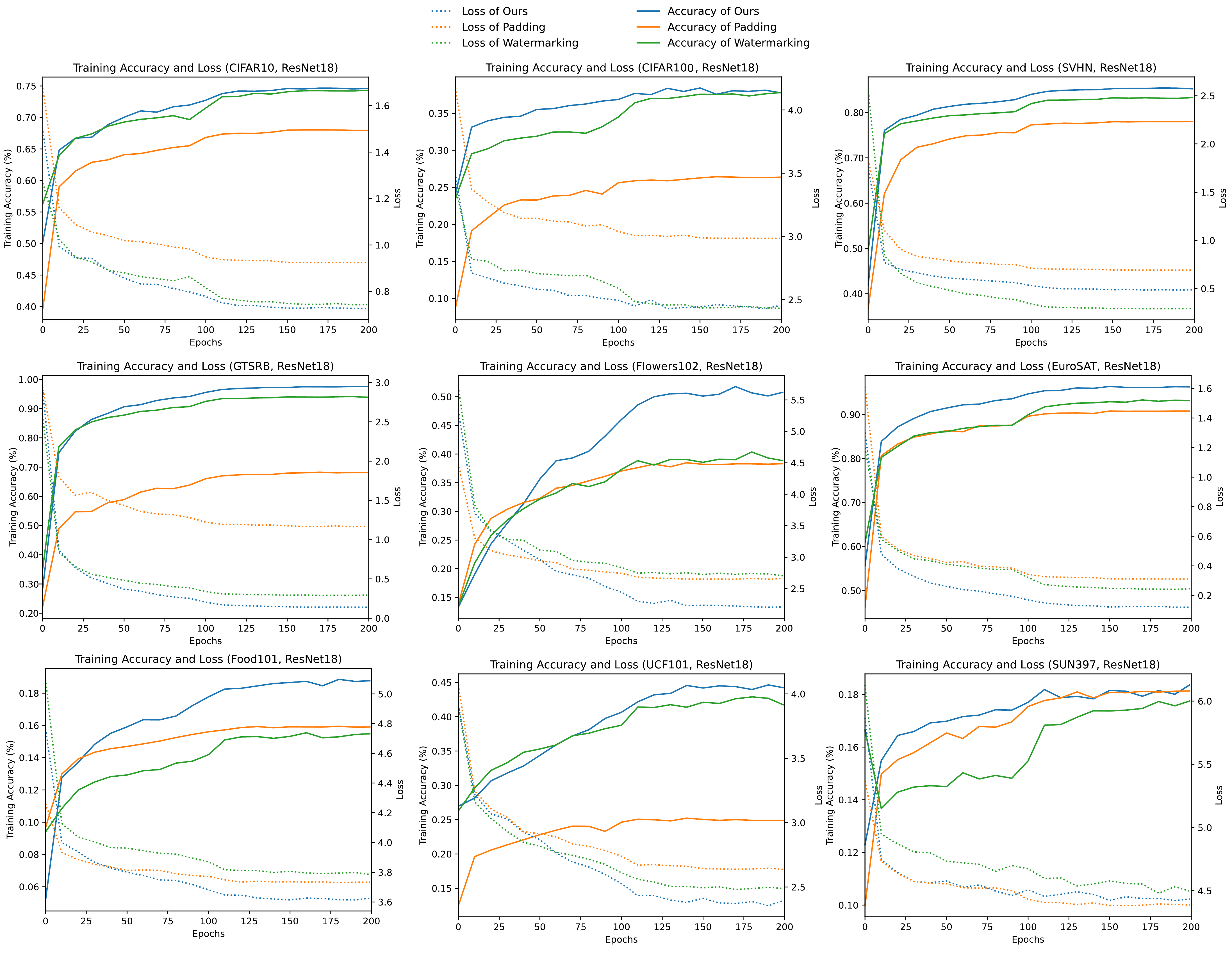}
    \caption{Training Accuracy and Loss of Different Reprogramming Methods}
    \label{fig:trainloss}
\end{figure}

\begin{figure}[!ht]
    \centering
    \includegraphics[width=\linewidth]{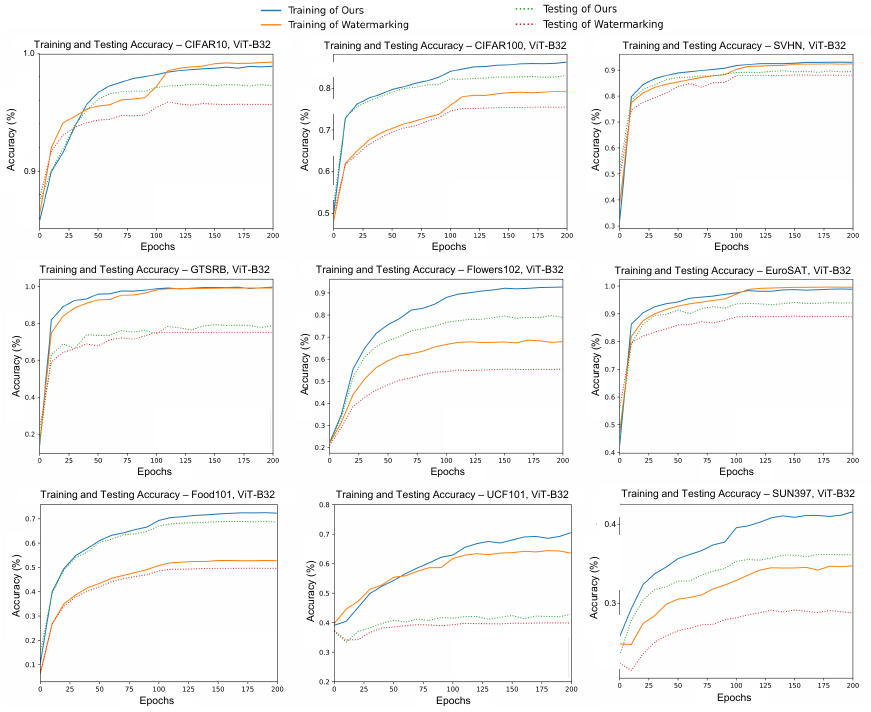}
    \caption{Training Accuracy and Testing Accuracy with and without Our Method}
    \label{fig:traintestacc}
\end{figure}

Figure \ref{fig:trainloss} shows the training accuracy and loss throughout learning iterations using ResNet-18 as the pre-trained backbone. We see that our SMM yields a higher training accuracy and lower loss for most cases. 

When using a more sophisticated pre-trained network, e.g., ViT, as is shown in Figure \ref{fig:traintestacc}, the training accuracy without SMM may meet with or even exceed that of using SMM. However, this appears to be a case of over-fitting, where training accuracy is approaching 1 and test accuracy is still low without using SMM.

In general, for smaller classifiers such as ResNet-18, adding our model helps better reduce training loss and improve accuracy, while for more sophisticated classifiers such as ViT-B32 where the training accuracy is already high, adding our SMM model helps prevent over-fitting and improve the testing accuracy.

\begin{table}[ht]
\caption{Training and Testing Accuracy with Enlarged $f_{\rm mask}$ (using EuroSAT, ResNet-18)}
\label{tab:error}
\centering
\begin{small}
\begin{sc}
\begin{tabular}{c|cccccc}
\toprule
$f_{\rm mask}$         & Small & Medium (ours) & Large  & x-Large & xx-Large & xxx-Large \\
\midrule
Parameters             & 7203  & 26499         & 101379 & 396291  & 1566723  & 6230019   \\
Training Accuracy (\%) & 94.9  & 96.2          & 96.4   & 97.3    & 97.7     & 98.1      \\
Testing Accuracy (\%)  & 91.7  & 92.2          & 92.2   & 93.1    & 93.5     & 93.2     \\
\bottomrule
\end{tabular}
\end{sc}
\end{small}
\end{table}

\subsection{More Discussion about the Estimation Error}
\label{app:error}
A higher estimation error generally implies an increased risk of model over-fitting to the training data. This observation can be corroborated by comparing the disparities in training and testing performance. For instance, as depicted in Figure~\ref{fig:traintestacc}, employing a more sophisticated pre-trained network such as ViT with a mask generator $f_{\rm mask}$ shown in Figure~\ref{fig:6layerarchitecture} across some tasks like CIFAR10, SVHN, and GTSRB, the training accuracy tends towards 100\% for both shared patterns $\mathcal{F}^{\rm shr}(f'_{\rm P})$ (i.e., `Watermarking' in Figure~\ref{fig:traintestacc}) and SMM patterns $\mathcal{F}^{\rm smm}(f'_{\rm P})$ (i.e., `Ours' in Figure~\ref{fig:traintestacc}). Despite this, $\mathcal{F}^{\rm smm}(f'_{\rm P})$ maintains a test accuracy that is not inferior to that of shared patterns. It suggests that our method SMM does not suffer from more significant over-fitting than shared masking, resulting in negligible potential estimation error.

However, when $f_{\rm mask}$ is enlarged with increased number of parameters, the additional estimation error of $\mathcal{F}^{\rm smm}(f'_{\rm P})$ may no longer be negligible and will impact the excess risk. The relationship between the number of parameters in $f_{\rm mask}$ and the estimation error is influenced by various factors, including the specific target tasks, the volume of training data, the size of well-trained models, and the design of our generation model, etc. Through experiments, we will be able to estimate when the number of parameters begins to impact estimation error, potentially leading to over-fitting. For instance, in Table~\ref{tab:error}, we employ our generation model $f_{\rm mask}$ on the EuroSAT dataset, with ResNet-18 being the well-trained model. By progressively doubling the number of intermediate channels while maintaining the architecture of $f_{\rm mask}$, we investigate how the model size affects performance.

Through the results of Table~\ref{tab:error}, we come to the following conclusions:
(1) As the number of parameters continues to increase, although the training accuracy slowly increases, the test accuracy may even decrease, implying that the estimation error becomes more and more noticeable.
(2) Under this situation (i.e., EuroSAT, ResNet-18), when the size of $f_{\rm mask}$ is close to the same order of magnitude as the well-trained model, the estimation error should not be overlooked.
(3) A larger model with the best test accuracy may not be optimal because of too many parameters. Our work strikes a balance between the number of parameters and test accuracy.

\subsection{Further Analysis of the Performance of SMM}
\textbf{More Discussion of SMM Abnormal Cases.}
\label{app:abnormal}
In Section \ref{exp}, we have briefly analyzed abnormal performance in Table~\ref{resnetresults} and Table~\ref{ViTresults}. In this section, we will provide a more comprehensive discussion. Here, we outline detailed discussions regarding abnormal performance:
\begin{itemize}
    \item ResNet-18, DTD: As shown in Figure~\ref{fig:dtd}, the DTD dataset contains a significant amount of texture features. Therefore, for relatively simple well-trained models, introducing reprogramming noise in the form of watermarking may affect the original features of the images. It can be observed that when the watermarking area is small (Narrow), the effect is better compared to when it is large (Full), and our method is also affected by this factor. However, the padding-based method preserves the original pixels of the image and only introduces reprogramming noise around them, thereby achieving relatively good results.
    \item ViT-B32, EuroSAT: This is because EuroSAT is one of the target tasks with the least task complexity. When using a large-scale network like ViT, the resizing-based method leads to over-fitting. As evident in the third column of the second row in Figure~\ref{fig:traintestacc}, the training accuracy is already close to 1. Therefore, in this scenario, the padding-based method yields slightly better test results compared to our method (which also belongs to resizing-based methods).
\end{itemize}

\begin{table}[ht]
\caption{An Ineffective   Case of Input Reprogramming  -   StanfordCars (Mean \% ± Std \%)}
\label{stanfordcars}

\begin{center}
\begin{small}
\begin{sc}
\begin{tabular}{cccccc}
\toprule
Method        & Pad         & Narrow     & Medium     & Full       & Ours       \\
\midrule

ResNet-18     & 4.5±0.1     & 3.6±0.1    & 3.6±0.1    & 3.4±0.1    & 2.9±0.2    \\
ResNet-50     & 4.7±0.2     & 4.7±0.1    & 4.7±0.2    & 4.6±0.1    & 3.0±0.6    \\
ViT-B32       & 4.7±0.6     & 7.7±0.2    & 8.3±0.3    & 5.0±0.0    & 4.8±0.9   \\
\bottomrule
\end{tabular}
\end{sc}
\end{small}
\end{center}
\end{table}
\textbf{SMM on An Ineffective Case of Input Reprogramming.} \label{stanfordappend}
All input visual reprogramming methods seem ineffective on fine-grained recognition tasks where subtle appearance differences should be detected. As shown in Table \ref{stanfordcars}, in the classification of StanfordCars, where 196 types of cars are to be classified, the accuracy of all input VR methods is below 10 \%, indicating the failure of VR methods in this fine-grained recognition tasks. Adding our SMM module will not improve performance when VR methods fail.

\section{Additional Discussion about Input VR Compared with Finetuning}
\label{app:compare}
\subsection{Advantages of VR in Dealing with Distorted Input Images}

\begin{table}[h]
\caption{Performance of Finetuning (LoRA) and SMM Facing Target Tasks with Different Input Image Sizes (Accyracy \%, using ViT-L with a 384$\times$384 input as the well-trained model, average results are calculated on all four tasks with 32$\times$32 inputs and all seven tasks with 128$\times$128 inputs)}
\centering
\begin{sc}
\begin{small}
\begin{tabular}{cc|cccc|cc}
\toprule
                & \makecell{Extra \\ Parameters} & CIFAR10       & CIFAR100      & SVHN          & GTSRB         & \makecell{Average \\ (32$\times$32)} & \makecell{Average \\ (128$\times$128)} \\
\midrule
Finetuning-LoRA & 0.60M            & 95.9          & 83.6          & 65.3          & 66.6          & 77.9            & 83.4              \\
Our SMM         & 0.54M            & \textbf{97.4} & \textbf{87.3} & \textbf{91.0} & \textbf{84.2} & \textbf{90.0}   & \textbf{83.5}  \\
\bottomrule
\end{tabular}
\label{tab:lora}
\centering
\end{small}
\end{sc}
\end{table}
In this section, we will compare the results of our SMM with finetuning-based methods to show the advantages of input VR in dealing with distorted input images. \textit{Low-rank adaptation} (LoRA) \cite{hu2021lora} is an efficient finetuning-based transfer method proposed based on large language models for natural language processing, which has been adapted to ViT \cite{zhu2023melo}. Here, we compare SMM for ViT with LoRA for ViT, which are representative methods that belong to input VR and finetuning, respectively.

Since LoRA for ViT already includes finetuning the fully connected layers, we also incorporate it in SMM. All training settings are kept the same. We set the rank of LoRA to be six, resulting in an additional parameter number being 0.60M (without counting the fully connected layers), which will be comparable to that of input VR and SMM (being 0.54M) for fairness. ViT-Large with the input size being 384$\times$384 is applied, and the learning rate is 0.01, running 10 epochs in total. Therefore, for both methods, the target training samples will be resized before input. VR mainly trains parameters in the input space before well-trained models, whereas LoRA injects parameters into layers of ViT. Results are listed in Table~\ref{tab:lora}.

The results of target tasks with the input size being 128$\times$128 are similar. However, it is observed that for those target tasks with lower resolution (e.g., CIFAR10/100, SVHN, GTSRB), our SMM appears to perform better. This is likely because when a 32$\times$32 image is resized to 384$\times$384, it may become distorted, thus affecting the performance of target tasks. This distortion is especially noticeable on tasks with simple features, such as SVHN and GTSRB. Since VR modifies the input space, it effectively addresses this issue of significant differences in the input image sizes of pre-trained and target tasks.

\subsection{Advantages of VR in Being Orthogonal to Finetuning-based Methods}

\begin{table}[h]
\caption{Performance of Finetuning the Fully-Connected Layers (Finetuning-FC) without or with our SMM Module (Accuracy \%, using ResNet-50 as the well-trained model)}
\centering
\begin{sc}
\begin{small}
\begin{tabular}{c|cccccc}
\toprule
                        & CIFAR10 & CIFAR100 & SVHN   & GTSRB   & Flowers102 & DTD     \\
\midrule
Finetuning-fc           & 90.1    & 70.7     & 63.5   & 77.8    & \textbf{90.9}       & 67.6    \\
Finetuning-fc + Our SMM & \textbf{91.2}    & \textbf{72.4}     & \textbf{86.9}   & \textbf{85.2 }   & \textbf{90.9}       & \textbf{68.2}    \\
\bottomrule
\toprule
                        & UCF101  & Food101  & SUN397 & EuroSAT & OxfordPets & Average \\
\midrule
Finetuning-fc           & 70.8    & 57.6     & 53.5   & 95.7    & 90.4       & 75.3   \\
Finetuning-fc + Our SMM & \textbf{72.0}      & \textbf{59.6}     & \textbf{57.9}   & \textbf{95.8}    & \textbf{90.6}       & \textbf{79.2}  \\
\bottomrule
\end{tabular}  
\label{tab:ft}
\end{small}
\end{sc}
\end{table}

Since finetuning and reprogramming are orthogonal because finetuning modifies the model while reprogramming modifies the input and output spaces. 
Input VR can also be combined with finetuning-based methods.
In this section, we will add the input VR module (i.e, using SMM as an example) to finetuning-based methods and analyze the performance gain.
A widely-used method - finetuning the fully connected layer (named `Finetuning-FC') - is employed as the baseline method. 
Using ResNet-50 as the well-trained model, we add our SMM input VR module to `Finetuning-FC' to demonstrate the effectiveness of our module.

Results are shown in Tabel~\ref{tab:ft}. Utilizing our module achieves an average accuracy of about 4\% higher than solely finetuning the fully connected layers. Conclusively, input VR can be attached to finetuning-based methods to improve performance.

\subsection{Strengths and Weaknesses of Input Reprogramming in Visual Tasks}
This part includes a conclusion of the strengths and weaknesses of Input VR, compared with finetuning-based methods.

\subsubsection{Strengths}
\begin{itemize}
    \item The \textit{parameter numbers} of VR tend to be \textit{negligible} considering the size of well-trained models. Besides, the parameter numbers in VR are solely determined by the size of a single input image, independent of well-trained models, and remain fixed as the well-trained model size grows.
    \item VR is suitable for all well-trained models, \textit{regardless of the architecture}, whereas finetuning-based methods are usually designed for a specific architecture (e.g., LoRA is specifically designed for ViT).
    \item VR improves the performance of the target task by altering the input and output space, and analyzing these changes may help \textit{understand why} the model can also perform well in the target domain.
    \item  By changing the input and output spaces while fixing the well-trained model, VR \textit{avoids practical issues} such as catastrophic forgetting (i.e., the well-trained model may lose previously learned representations when being finetuned for new tasks).
    \item VR \textit{can be attached to} most mainstream finetuning methods to further improve performance.
    \item In future research, VR could also utilize the well-trained model as a \textit{black box}. This approach might prove useful for re-purposing models that only offer an application programming interface.
\end{itemize}

\subsubsection{Weaknesses}
\begin{itemize}
    \item When target tasks are more \textit{challenging} than the tasks well-trained models have been trained on, merely adjusting the input space may \textit{not be sufficient} for satisfied performance. This poses a challenge for VR.
    \item For better performance approaching re-training or fully finetuning, integrating VR with other finetuning methods appears necessary (e.g., VR may be combined with finetuning the fully connected layer). \textit{How to train the combined model more effectively} remains a task for future research.
\end{itemize}

\section{Additional Visualization Results} \label{visualresults}
\begin{figure}[t]
    \centering
    \includegraphics[width=\linewidth]{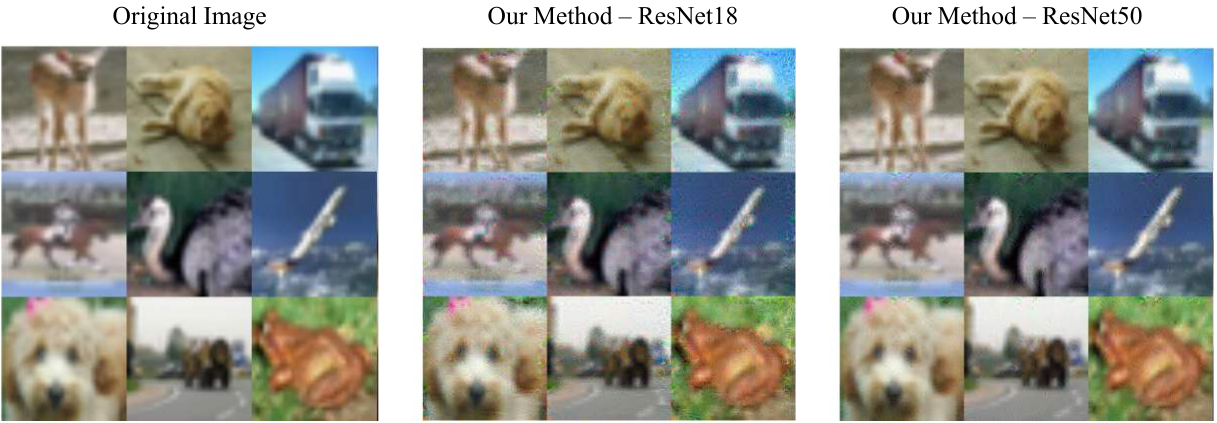}
    \caption{Original Images and Visual Reprogramming Results on CIFAR10}
    \label{fig:cifar10}
\end{figure}

\begin{figure}[t]
    \centering
    \includegraphics[width=\linewidth]{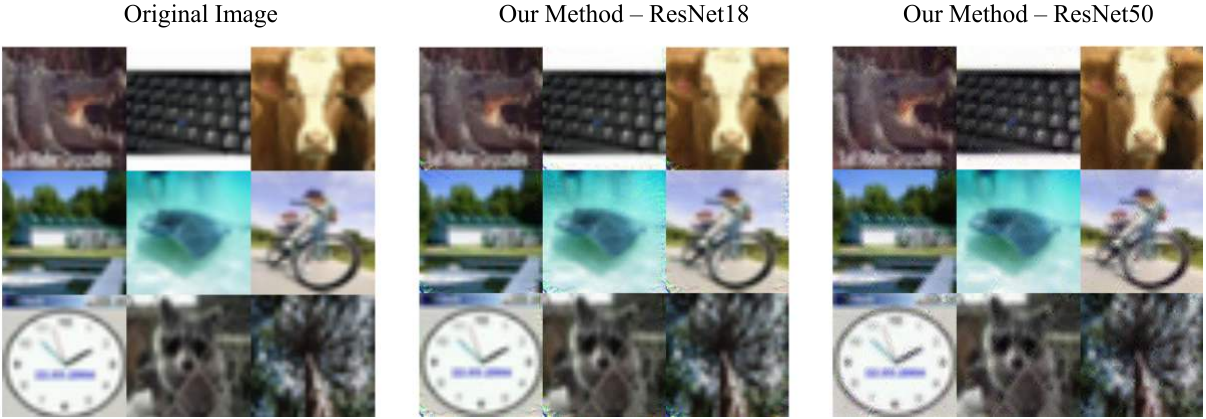}
    \caption{Original Images and Visual Reprogramming Results on CIFAR100}
    \label{fig:cifar100}
\end{figure}

\begin{figure}[t]
    \centering
    \includegraphics[width=\linewidth]{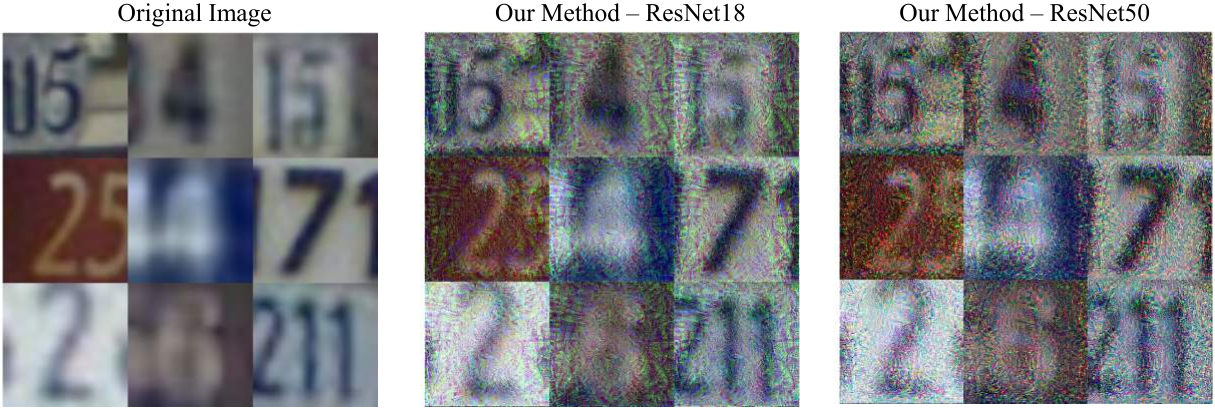}
    \caption{Original Images and Visual Reprogramming Results on SVHN}
    \label{fig:svhn}
\end{figure}

\begin{figure}[t]
    \centering
    \includegraphics[width=\linewidth]{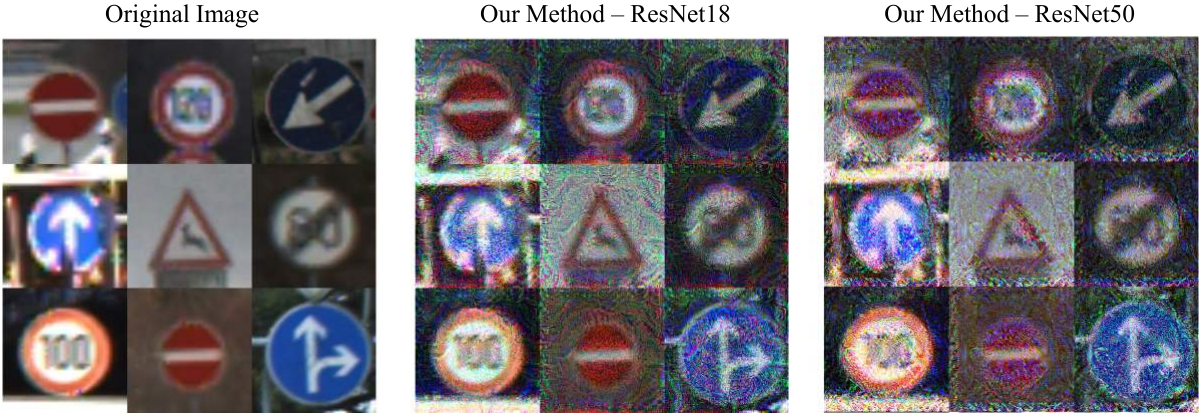}
    \caption{Original Images and Visual Reprogramming Results on GTSRB}
    \label{fig:gtsrb}
\end{figure}

\begin{figure}[t]
    \centering
    \includegraphics[width=\linewidth]{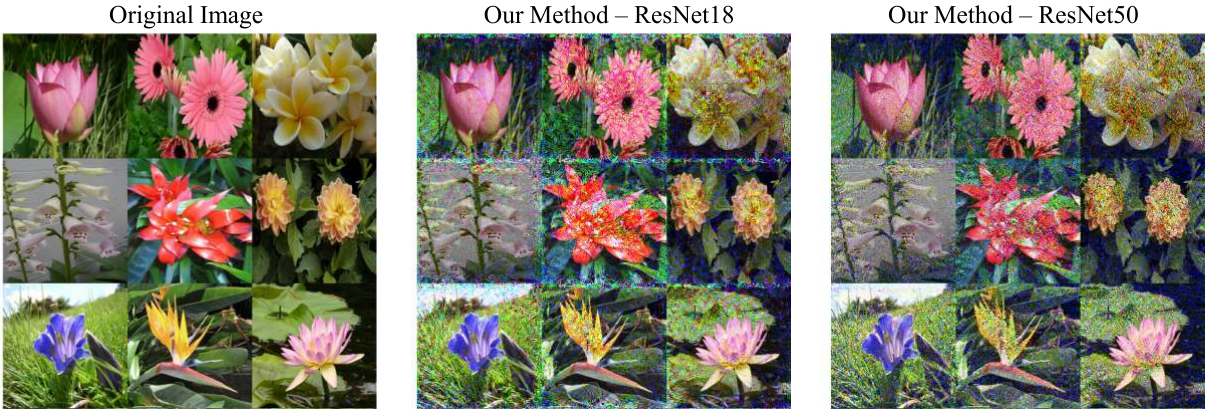}
    \caption{Original Images and Visual Reprogramming Results on Flowers102}
    \label{fig:flowers102}
\end{figure}

\begin{figure}[t]
    \centering
    \includegraphics[width=\linewidth]{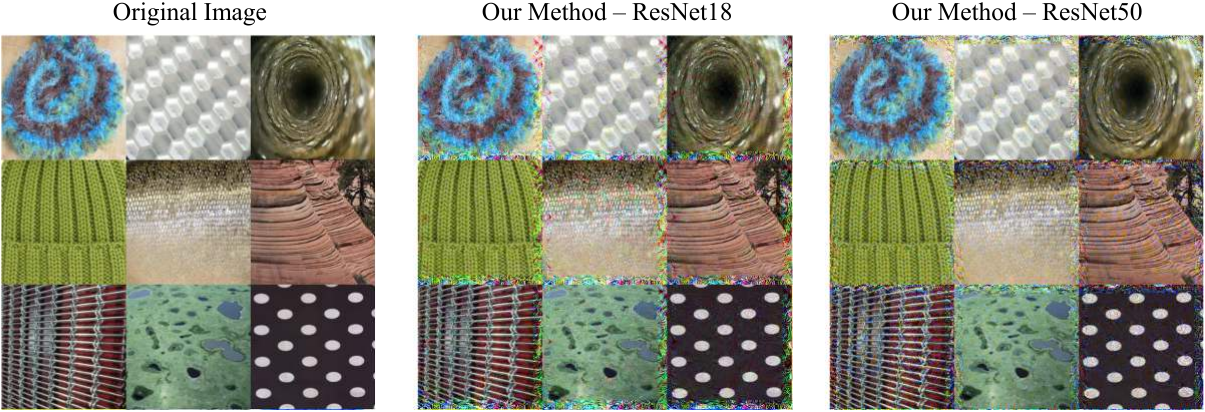}
    \caption{Original Images and Visual Reprogramming Results on DTD}
    \label{fig:dtd}
\end{figure}

\begin{figure}[t]
    \centering
    \includegraphics[width=\linewidth]{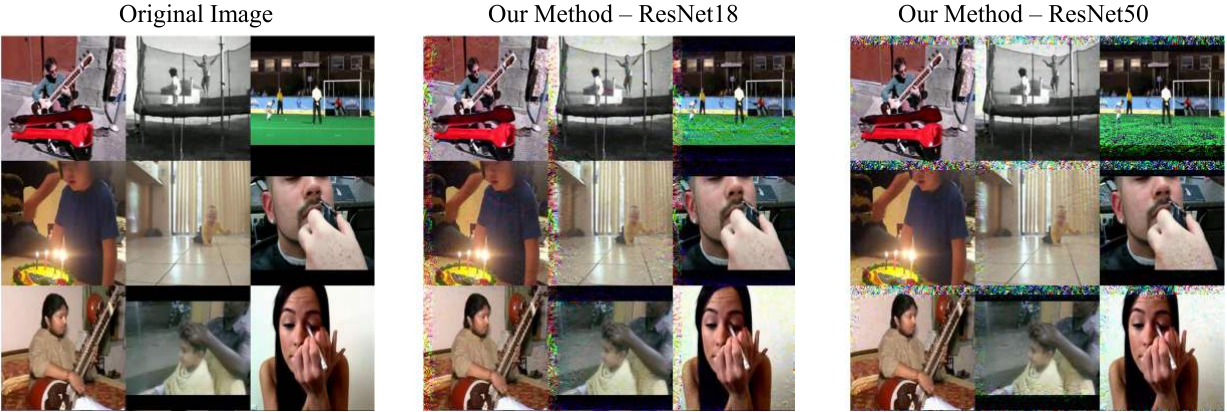}
    \caption{Original Images and Visual Reprogramming Results on UCF101}
    \label{fig:ucf101}
\end{figure}

\begin{figure}[t]
    \centering
    \includegraphics[width=\linewidth]{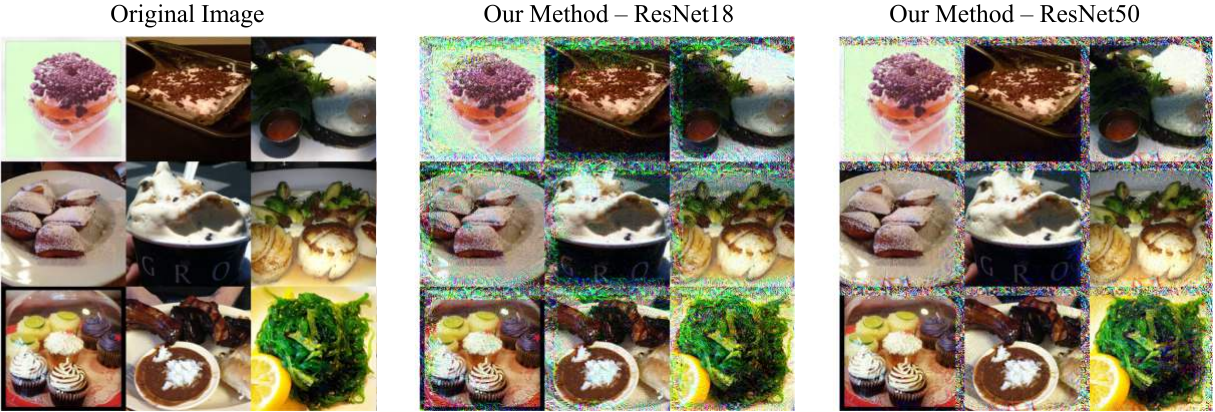}
    \caption{Original Images and Visual Reprogramming Results on Food101}
    \label{fig:food101}
\end{figure}

\begin{figure}[t]
    \centering
    \includegraphics[width=\linewidth]{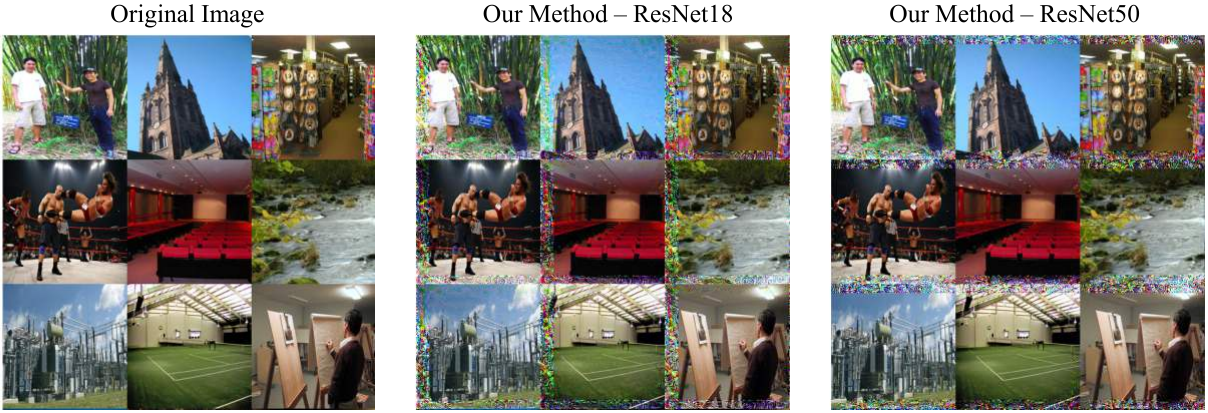}
    \caption{Original Images and Visual Reprogramming Results on SUN397}
    \label{fig:sun397}
\end{figure}

\begin{figure}[t]
    \centering
    \includegraphics[width=\linewidth]{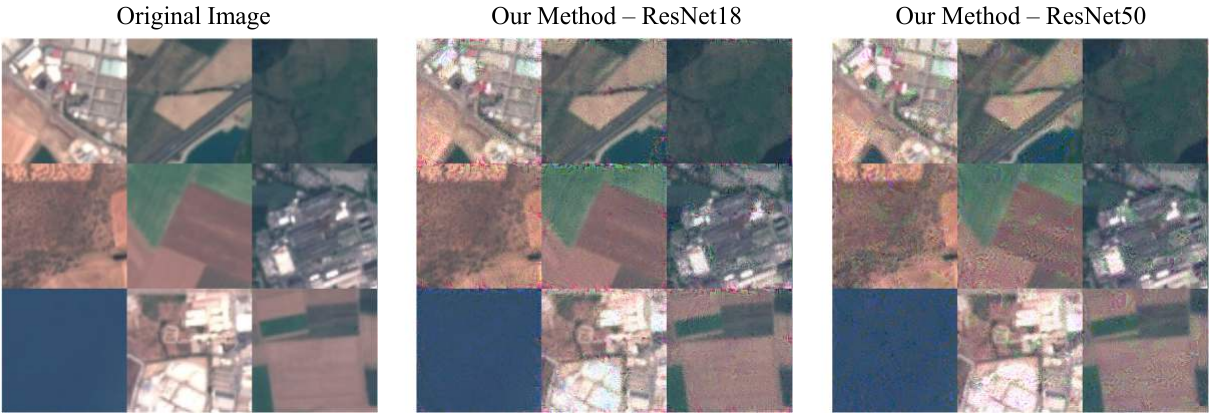}
    \caption{Original Images and Visual Reprogramming Results on EuroSAT}
    \label{fig:eurosat}
\end{figure}

\begin{figure}[t]
    \centering
    \includegraphics[width=\linewidth]{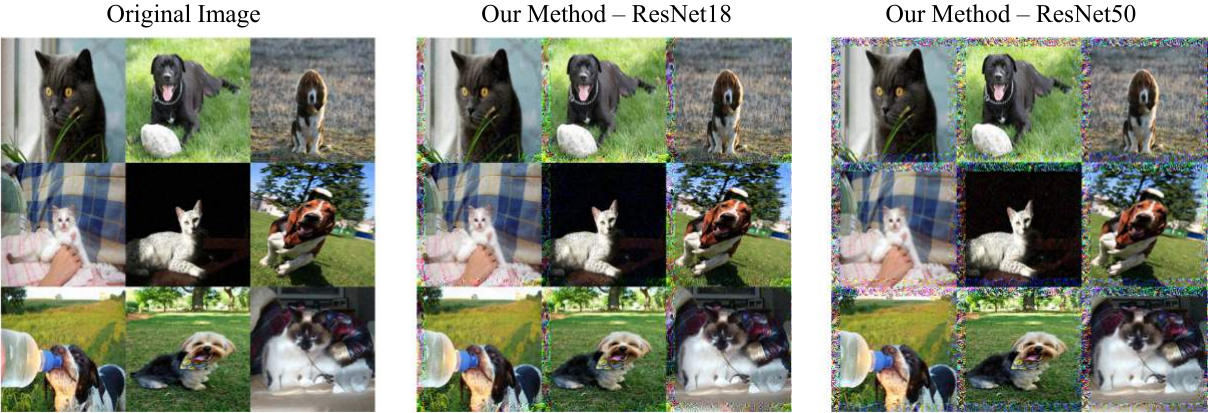}
    \caption{Original Images and Visual Reprogramming Results on OxfordPets}
    \label{fig:oxfordpets}
\end{figure}

Figure \ref{fig:cifar10}-\ref{fig:oxfordpets} show sample images of the VR results of SMM on 11 datasets. These figures show that (1) our VR method does not alter the input space heavily; it only adds noise within a limited range, which ensures that the original images remain intact; (2) the more different the target domain is (e.g., GTSRB and SVHN), the more pronounced the noise pattern will be; (3) on datasets that prefer VR to be a narrow padding-sized watermark, SMM will convergence to a similar situation, that is, the noise at the outer frame of the images is much greater than that inside the images (e.g., UCF101, Food101, OxfordPets and SUN397).

\end{document}